\documentclass{article}

\usepackage{microtype}
\usepackage{graphicx}
\usepackage{subcaption}
\usepackage{booktabs} 

\usepackage{hyperref}

\usepackage[accepted]{icml2026}

\usepackage{amsmath}
\usepackage{amssymb}
\usepackage{mathtools}
\usepackage{amsthm}
\usepackage{makecell}
\usepackage{enumitem}

\usepackage[capitalize,noabbrev]{cleveref}

\theoremstyle{plain}
\newtheorem{theorem}{Theorem}[section]
\newtheorem{proposition}[theorem]{Proposition}

\theoremstyle{definition}

\theoremstyle{remark}

\usepackage[normalem]{ulem}

\icmltitlerunning{Generative Inverse Design with Abstention via Diagonal Flow Matching}

\begin{document}

\twocolumn[
  \icmltitle{Generative Inverse Design with Abstention via Diagonal Flow Matching}



  \icmlsetsymbol{equal}{*}

  \begin{icmlauthorlist}
    \icmlauthor{Miguel de Campos}{tub}
    \icmlauthor{Werner Krebs}{siem}
    \icmlauthor{Hanno Gottschalk}{tub}
  \end{icmlauthorlist}

  \icmlaffiliation{tub}{Institute of Mathematics, TU Berlin}
  \icmlaffiliation{siem}{Siemens Energy}

  \icmlcorrespondingauthor{Miguel de Campos}{campos@math.tu-berlin.de}

  \icmlkeywords{Flow Matching, Uncertainty Quantification, Inverse Problems, Inverse Design}

  \vskip 0.3in
]



\printAffiliationsAndNotice{}  

\begin{abstract}
    Inverse design aims to find design parameters $x$ achieving target performance $y^*$. Generative approaches learn bidirectional mappings between designs and labels, enabling diverse solution sampling. However, standard conditional flow matching (CFM), when adapted to inverse problems by pairing labels with design parameters, exhibits strong sensitivity to their arbitrary ordering and scaling, leading to unstable training. We introduce Diagonal Flow Matching (Diag--CFM), which resolves this through a zero-anchoring strategy that pairs design coordinates with noise and labels with zero, making the learning problem provably invariant to coordinate permutations. This yields substantially lower round-trip error than CFM and invertible neural network baselines across design dimensions up to $P{=}784$, including order-of-magnitude gains on several benchmarks. We develop two architecture-intrinsic uncertainty metrics, Zero-Deviation and Self-Consistency, that enable three practical capabilities: selecting the best candidate among multiple generations, abstaining from unreliable predictions, and detecting out-of-distribution targets; consistently outperforming ensemble and general-purpose alternatives across all tasks. We validate on airfoil, gas turbine combustor, scalable analytical benchmarks, a photonics inverse-design task, and an image-statistics benchmark.
\end{abstract}

\section{Introduction}
    \label{sec:introduction}
    In many engineering domains, the design process involves finding geometric or parametric configurations that achieve desired performance characteristics. Traditional approaches to this problem rely on forward design, where engineers iteratively propose designs and evaluate their performance through simulations or experiments. This process can be time-consuming and may not efficiently explore the design space, and often ends with a single feasible design alternative. Inverse design reverses this paradigm: given target performance specifications, the goal is to directly generate designs that satisfy those requirements. This approach has gained significant attention in fields ranging from aerodynamics \cite{inverse_design_airfoil} to photonics \cite{inverse_design_photonics} and materials science \cite{inverse_design_materials}, where the ability to efficiently navigate complex, high-dimensional design spaces is crucial.
    
    Recent advances in generative modeling have enabled a new paradigm: generative design, where deep learning models learn from data to generate design parameters \cite{ardizzoneanalyzing}. These models, including variational autoencoders (VAEs), generative adversarial networks (GANs) \cite{generative_design_gan_airfoil}, and more recently diffusion models \cite{generative_design_diffusion_proteins}, can generate novel designs conditioned on desired performance criteria. Unlike optimization-based inverse design methods that solve for a single design per query, generative models can produce diverse design candidates that all satisfy the given specifications, offering engineers multiple options to choose from based on additional constraints or preferences not captured in the performance metrics.
    
    In this work, we develop an invertible conditional flow matching architecture \cite{lipmanflow} for generative design that operates bidirectionally, and is capable of both generating parametric designs given performance labels and computing performance predictions from designs. However, this generative capability introduces a critical challenge: the network will produce a design for any input performance specification, even those that are physically unfeasible or lie outside the training distribution. For instance, in aerodynamic design, the model might generate an airfoil shape for lift and drag coefficients that violate fundamental physical constraints. Without a mechanism to assess the reliability of generated designs, practitioners cannot distinguish between trustworthy predictions and spurious outputs.
    
    We address this challenge through uncertainty quantification (UQ) \cite{uncertainty_quantification_general}, introducing methods to quantify the model's confidence that a generated design genuinely achieves the specified performance. We develop two architecture-specific metrics---Zero-Deviation and Self-Consistency---that exploit the bidirectional structure of our flow, and compare against general-purpose baselines including ensemble methods \cite{uncertainty_qunatification_ensemble_methods}. These UQ metrics enable users to filter out unreliable designs and focus on candidates that the model confidently predicts will meet the desired specifications. We evaluate our approach on five tasks: airfoil aerodynamics \cite{dataset_unifoil}, gas turbine combustor design \cite{dataset_gas_turbine}, a scalable multi-objective benchmark \cite{deb2005scalable}, thin film optical coating, and a high-dimensional FashionMNIST image-statistics benchmark \cite{xiao2017fashionmnist}.

    \subsection{Contributions}
        \begin{itemize}[nosep]
            \item \textbf{Diagonal Flow Matching (Diag--CFM)}: a zero-anchoring strategy for conditional flow matching that induces a learning problem whose difficulty is provably independent of coordinate ordering. This eliminates the sensitivity to design and label permutations present in standard CFM, yielding large improvements in round-trip accuracy across design dimensions up to $P{=}784$.
            \item \textbf{Architecture-specific uncertainty metrics}: two novel methods---Zero-Deviation and Self-Consistency---that exploit Diag--CFM's bidirectional structure. Zero-Deviation is computed as a byproduct of generation with no additional forward passes. We compare against ensemble variance and flow matching loss baselines.
            \item \textbf{Comprehensive evaluation} on five datasets (gas turbine combustor, airfoil aerodynamics, DTLZ benchmark, thin film optical coating, and FashionMNIST image statistics), together with UQ experiments demonstrating three practical capabilities: selecting the best candidate among multiple generations, abstaining from unreliable predictions, and detecting out-of-distribution targets.
        \end{itemize}
        \noindent\textbf{Code availability.} A compact reference implementation of Diag--CFM with runnable DTLZ2, thin-film, and FashionMNIST examples is available at \href{https://github.com/migueldecampos/diagcfm}{github.com/migueldecampos/diagcfm}.

    This paper is structured as follows: In Section \ref{sec:related_work} we discuss related work, and make our problem set-up explicit in Section \ref{sec:problem_setup}. Section \ref{sec:methodology} introduces diagonal flow matching (Diag--CFM) and our Uncertainty Quantification (UQ) metrics.  Section \ref{sec:experiments} explains experiments and results. We conclude in Section \ref{sec:conclusion} with summary and outlook.

\section{Related Work}
    \label{sec:related_work}
    \subsection{Inverse Design}
        Inverse design refers to the class of methods that, given target performance specifications $y^\star$, aim to directly synthesize designs $x$ such that the forward response $f(x)$ matches $y^\star$ as closely as possible. In its most common formulation, one seeks a solution of the constrained optimization problem $\arg\min_{x \in \mathcal{X}} \mathcal{L}\!\big(f(x), y^\star\big) + R(x)$, where $f$ denotes a (possibly expensive) simulator or surrogate, $\mathcal{L}$ measures the mismatch between achieved and desired performance,
        and $R$ encodes regularization or additional engineering constraints.
        This framework has been widely adopted across engineering disciplines, including
        aerodynamic shape optimization~\cite{inverse_design_airfoil},
        nanophotonics~\cite{inverse_design_photonics}, and materials design~\cite{inverse_design_materials}.
        In many of these applications, gradients of $f$ with respect to $x$ are obtained via adjoint methods or automatic differentiation, enabling efficient gradient-based optimization even in high-dimensional design spaces.
        However, the inverse design problem is typically ill-posed: the mapping $f$ is many-to-one, so there may exist a manifold of designs $\{x : f(x) \approx y^\star\}$, while non-convexity leads to local minima and optimization trajectories that are sensitive to initialization and hyperparameters.
        In many cases, forward surrogate models like neural networks or Gaussian processes are used for inverse design tasks, see e.g. \cite{forrester2008engineering,martins2021engineering}. Here first a fast surrogate model is trained on expensive simulation data and the costly inverse search is then performed on the surrogate model. Despite considerable speedup, this method still suffers from being based on a forward model and in general is not capable to provide a comprehensive set of design alternatives \cite{dataset_gas_turbine}.

    \subsection{Generative Inverse Design}
        Recent approaches replace iterative optimization with learned mappings. Physics-informed neural networks~\cite{raissi2019physics,cai2021physics} and operator learning models~\cite{lu2021learning,li2020fourier,Drygala_2022,ross2025worldmodelssuccessfullylearn} learn fast forward surrogates but remain unidirectional. Domain-specific generative models have been developed for applications including airfoil parameterization via GANs~\cite{generative_design_gan_airfoil}, protein structure generation via diffusion~\cite{generative_design_diffusion_proteins}, and physics foundation models~\cite{nguyen2025physix,simonds2025llms}, though these typically require task-specific training or prompting.
        Some recent works integrate physical constraints directly into the generation process by coupling generative models with optimization or iterative sampling~\cite{diffusion_optimization_models,wucompositional}. For example, DFlow-SUR~\cite{dflow_sur} propagates physical gradients through the flow matching ODE, actively steering generation toward feasible targets. While highly accurate, these methods reintroduce the computational cost of iterative optimization at inference time.
        Simulation-based inference (SBI) provides a closely related perspective: it learns amortized posterior samplers for unknown parameters given simulator observations, and recent work applies flow matching to this setting~\cite{wildberger2023flow}. These methods typically use the observation or target as a conditioning variable and transport latent noise to samples from the conditional posterior.
        In contrast, our work follows the fully invertible paradigm~\cite{ardizzoneanalyzing,dinh2017density,chan2023lu,chen2018neural}, which amortizes the cost of inverse design by learning a direct, bidirectional map~\cite{ardizzoneanalyzing,dataset_gas_turbine}. We build on Conditional Flow Matching (CFM)~\cite{lipmanflow} but identify that its standard formulation is unstable for inverse problems. We resolve this via Diagonal Flow Matching (see Section~\ref{sec:methodology}), enabling accurate, single-pass generation without test-time optimization. We use coupling-based INNs~\cite{dinh2017density} as a further baseline in our experiments.

    \subsection{Uncertainty Quantification}
        Uncertainty Quantification (UQ) is a widely studied topic in machine learning research \cite{hullermeier2021aleatoric}. Numerous UQ methods are available, e.g. MC-dropout \cite{gal2016dropout}, ensemble methods \cite{lakshminarayanan2017simple}. In the context of generative learning, hallucination detection \cite{ji2023survey} is a prominent topic. A few works apply this to generative learning in physics \cite{rathkopf2025hallucination,brown2024physics}. These however are not compatible with the approach we pursue here, as the physics based checks to detect hallucination require a full synthesis of physical states. Here we work with much smaller and faster models that are only trained on the design parameter to label relation.

\section{Problem Setup and Preliminaries}
    \label{sec:problem_setup}
    \paragraph{Design and label spaces.}
    We consider \emph{parametric design} problems in which a design is represented by a vector $x \in \mathbb{R}^{P}$ (e.g., geometric parameters or operating setpoints), and its performance is summarized by a vector of labels $y \in \mathbb{R}^{L}$ with $L < P$. The strict inequality reflects the typical many-to-one nature of forward evaluation: for a given performance specification $y$, the pre-image $\{x:f(x)=y\}$ under the evaluation function $f:\mathbb{R}^P\to \mathbb{R}^L$ often contains more than one element.
    
    \noindent We assume access to pairs $(x_i, y_i)$ drawn from a data-generating process $y \;=\; f(x) \;+\; \varepsilon$,
    where $f:\mathbb{R}^{P}\!\to\!\mathbb{R}^{L}$ is a (possibly expensive) evaluator based on e.g. finite element analysis, computational fluid dynamics, or some other type of high-fidelity simulator and $\varepsilon$ models measurement/simulation noise.
    
    \paragraph{Feasible targets and success sets.}
    Not all label vectors are physically attainable. We denote the (unknown) feasible set $\mathcal{Y}_{\mathrm{feas}} \coloneqq \{ y \in \mathbb{R}^{L}\;:\;\exists\, x \in \mathbb{R}^{P}\text{ with } f(x)= y \}$.
    Given a tolerance $\tau\!\ge\!0$ and a target $y^\star \in \mathcal{Y}_{\mathrm{feas}}$, the \emph{success set} in design space is $\mathcal{S}(y^\star,\tau)\coloneqq \{ x \in \mathbb{R}^{P}\;:\; \|f(x)-y^\star\| \le \tau \}.$

    \paragraph{Bidirectional Modeling Objectives.} We aim to develop a model with two complementary capabilities:
    \begin{enumerate}[nosep]
        \item Generative (Inverse) Direction: Given a target performance $y^\star$, sample designs $x \sim q_\theta(x\mid y^\star)$ that concentrate on $\mathcal{S}(y^\star,\tau)$, revealing the diversity of feasible solutions.
        \item Predictive (Forward) Direction: Given a design $x$, predict its performance $y$. This serves both as a fast surrogate for expensive simulations and as a consistency check for generated designs.
    \end{enumerate}
    
    \paragraph{Dimensionality and invertible parameterization.}
    Since $L<P$, a smooth bijection between $x$ and $y$ cannot exist without additional latent degrees of freedom. We therefore introduce a latent variable $z \in \mathbb{R}^{P-L}$ drawn from a simple base distribution $p_B(z)$ (e.g.\ uniform or normal distribution), and aim to learn an invertible map $T_\theta: (z,y) \longleftrightarrow x$, so that sampling $z\!\sim\!p(z)$ and applying $x=T_\theta(z,y^\star)$ yields $x \sim q_\theta(x\mid y^\star)$, while the reverse map recovers $(\tilde z,\hat y_\theta(x))=T_\theta^{-1}(x)$. This augmented formulation reconciles the dimension mismatch ($\dim(z,y)=P$) and makes synthesis and analysis two directions of the same learned transformation. While $P{-}L$ latent dimensions suffice in principle, we show in Section~\ref{sec:methodology} that augmenting to $P$ dimensions enables a simpler, more stable formulation.

    \paragraph{Uncertainty in Generative Design.} In probabilistic modeling, uncertainty is typically decomposed into epistemic and aleatoric uncertainty \cite{hullermeier2021aleatoric}.
    For the problem of identifying unfeasible or out-of-distribution performance queries, epistemic uncertainty is of primary interest. When a user requests a design with performance $y^\star$ that is unattainable (e.g., an airfoil with impossibly high lift-to-drag ratio), we expect the model to exhibit high epistemic uncertainty. Our uncertainty quantification methods aim to capture this epistemic uncertainty, enabling practitioners to assess the trustworthiness of generated designs online, i.e.\ without setting up physics based simulations.

\section{Methodology}
    \label{sec:methodology}
    We instantiate the bidirectional generative-predictive model using a time-dependent continuous normalizing flow trained with \emph{conditional flow matching} (CFM). The core idea is to learn a vector field whose induced ODE flow transports the source distribution over \emph{augmented inputs} $(z,y)$ to the empirical distribution of designs $x$, thereby yielding an invertible map $T_\theta:(z,y)\longleftrightarrow x$,
    so that (i) sampling $z\!\sim\!p_B$ and integrating forward produces diverse designs conditioned on $y$, and (ii) integrating the dynamics backward from a design $x$ reconstructs $(\tilde z, \hat y_\theta(x))$ for fast, self-consistent performance prediction. We first detail the CFM objective and the induced algorithms for synthesis and analysis, as well as our new method, Diag--CFM; and then introduce our uncertainty estimators built atop this flow.
    
    \subsection{Conditional Flow Matching}
        \paragraph{Setup and probability path.}
        Let $p_{\text{data}}(x,y)$ denote the empirical joint distribution over designs and labels, $p_B(z)$ a simple base (e.g., uniform or standard normal) on $\mathbb{R}^{P-L}$, and define the \emph{source} distribution on $\mathbb{R}^{P}$ by
        \[
            s_0 \;=\; \big[z;\,y\big], \qquad (x,y)\sim p_{\text{data}},\; z\sim p_B.
        \]
        The \emph{target} sample is $s_1 = x \in \mathbb{R}^P$. We couple $(s_0,s_1)$ via the empirical pairing of $(x,y)$ with an independent $z$. For $t\in[0,1]$, we choose a simple linear probability path $s_t = (1-\sigma(t))\,s_0 + \sigma(t) s_1$, with $\sigma(t)=t$, whose \emph{conditional} velocity along a paired path is
        \[
        u_t(s_t \mid s_0,s_1) \;=\; \tfrac{d}{dt}s_t \;=\; \dot\sigma(t)\,(s_1 - s_0) \;=\; (s_1-s_0).
        \]
        
        \paragraph{Learning a transport vector field.}
        We parameterize a time-dependent vector field $v_\theta:[0,1]\times\mathbb{R}^P\to\mathbb{R}^P$ and train it by regressing to the target conditional velocity,
        \begin{equation}
            \label{eq:cfm-target-velocity}
            u \coloneqq (s_1 - s_0) = \begin{bmatrix}
                [x]_{1:P-L} - z \\
                [x]_{P-L + 1:P} - y
            \end{bmatrix} \in \mathbb{R}^P,
        \end{equation}
        along randomly sampled paths:
        \begin{equation}
            \label{eq:cfm-loss}
            \mathcal{L}_{\text{CFM}}(\theta) = \mathbb{E}_{
                \substack{(x,y)\sim p_{\text{data}} \\ z\sim p_B,\; t\sim\mathcal{U}[0,1]}
            }
            \Big[ \big\| v_\theta(t, s_t) - (s_1 - s_0) \big\|_2^2 \Big].
        \end{equation}
        It is known that the minimizer of \eqref{eq:cfm-loss} recovers the \emph{conditional expectation} of the target velocity $u_t$ given $(t,s_t)$; integrating the ODE $\dot s = v_\theta(t,s)$ from $t{=}0$ to $1$ then pushes forward the source distribution to the target distribution along the specified probability path \cite{lipmanflow}.
        Because ODE flows are invertible, we obtain a bidirectional model through reverse-time integration.
        
        \paragraph{Bidirectional use.}
        Given a trained $v_\theta$, we realize both directions as numerical integrations:
        \begin{align*}
            &\ \textbf{Synthesis (inverse)}:\\
            &\qquad \text{Input } y^\star,\; z\sim p_B,\; s(0)=\big[z;\,y^\star\big],\\
            &\qquad \text{Integrate } \dot s=v_\theta(t,s)\;\text{ to }t{=}1;\;\; x \gets s(1).\\[3pt]
            &\ \textbf{Analysis (forward)}:\\
            &\qquad \text{Input } x,\; s(1)=x,\\
            &\qquad\text{Integrate } \dot s=-\,v_\theta(t,s)\;\text{ to }t{=}0;\;\; (\tilde z,\,\hat y_\theta(x)) \gets s(0).
        \end{align*}
        Thus a \emph{single} invertible flow provides conditional generation $x\!\sim\!q_\theta(x\mid y^\star)$ and a fast predictor $\hat y_\theta(x)$ consistent with the same transport.
        
        \paragraph{Diagonal Conditional Flow Matching}
        We can observe in equation \eqref{eq:cfm-loss} that the conditional flow matching loss is \emph{component-wise}: each dimension of $v_\theta(t, s_t)$ is independently regressed to the corresponding dimension of the target velocity $(s_1 - s_0)$. Unlike typical generative modeling scenarios where one flows from pure noise to data (e.g., image generation), our augmented space construction flows from $s_0=[z; y]\in\mathbb{R}^{P}$ to $s_1=x\in\mathbb{R}^{P}$, that is, the last $L$ coordinates compare design components to labels ($x_{P-L+i}-y_i$), while the first $P{-}L$ compare design components to noise ($x_i-z_i$). Because of this, we observed a strong dependence on the arbitrary ordering of design and label coordinates: permutations that \emph{align} numerically similar coordinates (e.g., comparable scales) reduce the loss and train more stably (see Figure~\ref{fig:cfm-diagcfm-ablation}).

        This sensitivity to the ordering of the design and label vectors is an undesirable artifact, increasing the complexity of training, and negatively impacting performance. To remove this sensitivity and significantly increase performance, we \emph{anchor} labels against zero and pair all design coordinates with latent noise. Let $R\in\{0,1\}^{P\times(P-L)}$ be a fixed replication/assignment matrix that expands the latent to $z^{\uparrow}=Rz\in\mathbb{R}^{P}$ by repeating entries of $z\!\in\!\mathbb{R}^{P-L}$. We augment the state by $L$ extra coordinates and define
        \[
            s_0^{\widetilde{\mathrm{diag}}}=\big[\,z^{\uparrow};\;y\,\big]\in\mathbb{R}^{P+L},\qquad s_1^{\widetilde{\mathrm{diag}}}=\big[\,x;\;0_L\,\big]\in\mathbb{R}^{P+L}.
        \]
        With the linear path $s_t^{\widetilde{\mathrm{diag}}}=(1-t)s_0^{\widetilde{\mathrm{diag}}}+t s_1^{\widetilde{\mathrm{diag}}}$, the target velocity is
        \begin{align}
            \label{eq:raw-diag-cfm-target-velocity}
            u^{\widetilde{\mathrm{diag}}} \coloneqq s_1^{\widetilde{\mathrm{diag}}}-s_0^{\widetilde{\mathrm{diag}}} =\begin{bmatrix}
                x - Rz\\
                -y
            \end{bmatrix} \in \mathbb{R}^{P+L},
        \end{align}
        so the per-coordinate regression always matches \emph{design} coordinates to \emph{noise} and \emph{label} coordinates to \emph{zero}.

        At inference, synthesis starts from $s(0)=[Rz;\,y^\star]$ and returns the first $P$ coordinates at $t{=}1$ as the design $x$; analysis starts from $s(1)=[x;\,0]$ and reads the last $L$ coordinates at $t{=}0$ as $\hat y_\theta(x)$. This “zero-anchoring” eliminates the dependence on coordinate ordering (labels are never directly subtracted from design parameters), reduces scale/units mismatch in the loss, and preserves the total number of independent degrees of freedom ($\dim z = P{-}L$) because $R$ has rank $P{-}L$. However, since generating random entries is not expensive, we can simplify the setup and dispense with the need for a matrix $R$ by working with noise $z \in \mathbb{R}^P$ from a $P$-dimensional noise distribution $p_B(z)$. This yields
        \[
            s_0^{\mathrm{diag}}=\big[\,z;\;y\,\big]\in\mathbb{R}^{P+L},\qquad s_1^{\mathrm{diag}}=\big[\,x;\;0_L\,\big]\in\mathbb{R}^{P+L},
        \]
        with target velocity
        \begin{align}
            \label{eq:diag-cfm-target-velocity}
            u^{\mathrm{diag}} \coloneqq s_1^{\mathrm{diag}}-s_0^{\mathrm{diag}} =\begin{bmatrix}
                x - z\\
                -y
            \end{bmatrix} \in \mathbb{R}^{P+L}.
        \end{align}
        We refer to this variant as \emph{Diagonal} or \emph{Zero-Anchored}  (Diag--CFM). The Diag--CFM loss is
        \begin{equation}
            \label{eq:diag-cfm-loss}
            \mathcal{L}_{\text{Diag--CFM}}(\theta)= \mathbb{E}_{(x,y),z,t}\,\big\|\, v_\theta\!\big(t, s_t^{\mathrm{diag}}\big) - u^{\mathrm{diag}} \big\|_2^2.
        \end{equation}

        \begin{proposition}
            \label{prop:diag-cfm-equivariant}
            The distribution of the Diag--CFM target velocity \eqref{eq:diag-cfm-target-velocity},
            \[
                U^{\mathrm{diag}} = \begin{bmatrix}
                    X - Z\\
                    -Y
                \end{bmatrix}_{X, Y \sim p_{\text{data}}(x, y), Z \sim p_B(z)},
            \]
            is equivariant with respect to permutations of label and parameter coordinates.
        \end{proposition}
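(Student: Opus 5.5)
To prove the proposition, we first fix what equivariance means here. Let $\pi\in S_P$ act on design coordinates through a permutation matrix $\Pi_\pi\in\{0,1\}^{P\times P}$, and let $\rho\in S_L$ act on label coordinates through $\Pi_\rho\in\{0,1\}^{L\times L}$. Relabeling the data gives new variables $X'=\Pi_\pi X$ and $Y'=\Pi_\rho Y$, and we write $U'^{\mathrm{diag}}$ for the Diag--CFM target velocity built from $(X',Y')$ and fresh noise $Z\sim p_B$. The claim to establish is
\[
U'^{\mathrm{diag}} \overset{d}{=} \begin{bmatrix}\Pi_\pi & 0\\ 0 & \Pi_\rho\end{bmatrix} U^{\mathrm{diag}}.
\]
In words, permuting the inputs only permutes the regression target, with no change to the per-coordinate pairings. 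We state one assumption explicitly: $p_B$ is exchangeable (invariant under coordinate permutations). This holds for the i.i.d.\ uniform or standard normal bases used in the paper.

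The core computation is short. Using exchangeability of $p_B$, we replace the fresh noise $Z$ by $\Pi_\pi Z$, which is valid because $\Pi_\pi Z\overset{d}{=}Z$ and $Z$ is independent of $(X,Y)$. We must make this replacement jointly, not marginally. Since $(X,Y,\Pi_\pi Z)\overset{d}{=}(X,Y,Z)$ by independence and exchangeability, any measurable function of the triple has the same law under either version. Applying this gives
\[
U'^{\mathrm{diag}}=\begin{bmatrix}\Pi_\pi X - Z\\ -\Pi_\rho Y\end{bmatrix}\overset{d}{=}\begin{bmatrix}\Pi_\pi X-\Pi_\pi Z\\ -\Pi_\rho Y\end{bmatrix}=\begin{bmatrix}\Pi_\pi & 0\\ 0&\Pi_\rho\end{bmatrix}\begin{bmatrix}X-Z\\-Y\end{bmatrix},
\]
which is exactly the claimed identity.

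The same argument applies to the full training pair $(s_t^{\mathrm{diag}},u^{\mathrm{diag}})$, since $s_t^{\mathrm{diag}}=(1-t)[Z;Y]+t[X;0_L]$ transforms by the same block permutation. This extension is what makes the Diag--CFM loss \eqref{eq:diag-cfm-loss} invariant: a permuted network $\tilde v(t,s)=\Pi^{\top}v_\theta(t,\Pi s)$ achieves identical loss, where $\Pi$ is the block permutation above. It follows that the problem's difficulty does not depend on coordinate ordering.

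For contrast, we can point to standard CFM \eqref{eq:cfm-target-velocity}. There the target $X_{P-L+1:P}-Y$ mixes design and label coordinates, so permuting only $Y$, or moving a design coordinate between the noise block and the label block, changes which pairs are subtracted. Such a change is not a coordinate permutation of $U$, so the law of the target changes in general.

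The main obstacle is making the notion of equivariance precise rather than carrying out the computation. In particular, the proof needs the joint distributional identity over $(X,Y,Z)$ and the exchangeability of $p_B$, and both should be stated explicitly.
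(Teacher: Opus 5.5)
Your proposal is correct and follows the paper's proof: both write $[\Pi_P X - Z;\,-\Pi_L Y]$ as the block permutation $\mathrm{diag}(\Pi_P,\Pi_L)$ applied to $[X - \Pi_P^{-1}Z;\,-Y]$, and both then use permutation invariance of the noise law. Your substitution of $\Pi_\pi Z$ for $Z$ is the same step. You are slightly more careful than the paper in two places: you state that the substitution must hold jointly with $(X,Y)$, which requires $Z$ to be independent of the data, and you weaken the i.i.d.\ assumption to exchangeability of $p_B$.
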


        \begin{proposition}
            \label{prop:cfm-not-equivariant}
            The distribution of the CFM target velocity \eqref{eq:cfm-target-velocity},
            \[
                U = \begin{bmatrix}
                [X]_{1:P-L} - Z \\
                [X]_{P-L + 1:P} - Y
            \end{bmatrix}_{X, Y \sim p_{\text{data}}(x, y), Z \sim p_B(z)},
            \]
            is not equivariant with respect to permutations of label and parameter coordinates.
        \end{proposition}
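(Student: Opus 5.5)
The statement to prove is Proposition~\ref{prop:cfm-not-equivariant}, a negative claim, so the plan is to exhibit an explicit counterexample. First I fix what equivariance means, consistently with Proposition~\ref{prop:diag-cfm-equivariant}. A permutation $\pi$ of the $P$ design coordinates and $\rho$ of the $L$ label coordinates act on the data as $(X,Y)\mapsto(\pi X,\rho Y)$. The target velocity is equivariant if the velocity computed from the permuted data has the same distribution as a fixed coordinate permutation $\Pi_{\pi,\rho}$ of the original velocity. For Diag--CFM this holds with $\Pi=\mathrm{diag}(\pi,\rho)$, provided $p_B$ is exchangeable (e.g.\ i.i.d.\ uniform or normal), since $\pi Z\overset{d}{=}Z$.

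To refute equivariance for CFM, it suffices to find one data distribution and one pair $(\pi,\rho)$ for which no coordinate permutation $\Pi$ of $\mathbb{R}^P$ satisfies $U'\overset{d}{=}\Pi U$. A convenient invariant is the multiset of one-dimensional marginal laws of the coordinates of $U$, which is preserved by any $\Pi$. So I only need a permutation that changes this multiset.

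Concretely, take $P=2$, $L=1$ and $Z\sim\mathcal N(0,1)$. Let the data be deterministic, $X=(0,c)$ and $Y=c$ with $c\neq0$. Then
\[
U=\begin{bmatrix} -Z\\ c-c\end{bmatrix}=\begin{bmatrix}-Z\\0\end{bmatrix}.
\]
Now apply the transposition $\pi$ swapping the two design coordinates, with $\rho=\mathrm{id}$. The permuted design is $X'=(c,0)$, and
\[
U'=\begin{bmatrix}c-Z\\ 0-c\end{bmatrix}=\begin{bmatrix}c-Z\\-c\end{bmatrix}.
\]
The coordinate marginals of $U$ are $\{\mathcal N(0,1),\delta_0\}$, while those of $U'$ are $\{\mathcal N(c,1),\delta_{-c}\}$. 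These multisets differ whenever $c\neq0$, so no $\Pi$ works. For a non-degenerate variant, let $X$ and $Y$ carry small independent noise; the same comparison then goes through on means or variances of the marginals.

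The main point needing care is the definition step. The claim must be stated against \emph{every} coordinate permutation $\Pi$ of the output, not only $\mathrm{diag}(\pi,\rho)$, otherwise the result is weaker than intended. Using a permutation-invariant statistic such as the multiset of marginals, or the sorted vector of coordinate means, handles all $\Pi$ at once.

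I would close by noting the mechanism behind the failure: in CFM, labels are subtracted from whichever design coordinates happen to occupy the last $L$ slots. Permuting the design coordinates therefore changes which differences $x_j-y_i$ appear, not just their order. In Diag--CFM, labels are paired only with $0$ and designs only with exchangeable noise, which is why Proposition~\ref{prop:diag-cfm-equivariant} holds there.
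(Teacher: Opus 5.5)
Your proof is correct and is essentially the paper's argument. Both proofs choose $Y$ equal to the trailing design coordinates so that the label block of $U$ vanishes, then transpose design coordinates to move a non-label coordinate into that block, and detect the change with a permutation-invariant statistic of the coordinate marginals. The paper counts almost surely zero entries, which is the multiplicity of $\delta_0$ in your multiset. One difference is scope: the paper's i.i.d.\ absolutely continuous $X$ covers all $P>L\ge 1$ at once, while your deterministic example is written for $P=2$, $L=1$. Padding it to $X=(0,\dots,0,c,\dots,c)$, $Y=(c,\dots,c)$ and swapping the first and last design coordinates extends it verbatim to general $P>L\ge 1$.
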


        \noindent Proofs are provided in Appendix~\ref{appendix:proofs}. Proposition~\ref{prop:diag-cfm-equivariant} establishes that the target velocity field is equivariant w.r.t. permutations of label and parameter coordinates. This does not imply that the model used or specific model initializations are invariant to coordinates permutations, but rather that the Diag--CFM \emph{learning problem} is invariant to coordinate permutations. That is, that the class of functions required to approximate the target velocity is the same across all orderings, thereby decoupling the task complexity from the arbitrary arrangement of design and label vectors; a stability we validate in Figure~\ref{fig:cfm-diagcfm-ablation} and Appendix~\ref{appendix:ablation}.

        \begin{figure}[t]
            \centering
            \begin{subfigure}[b]{0.4\linewidth}
                \centering
                \includegraphics[width=0.75\linewidth]{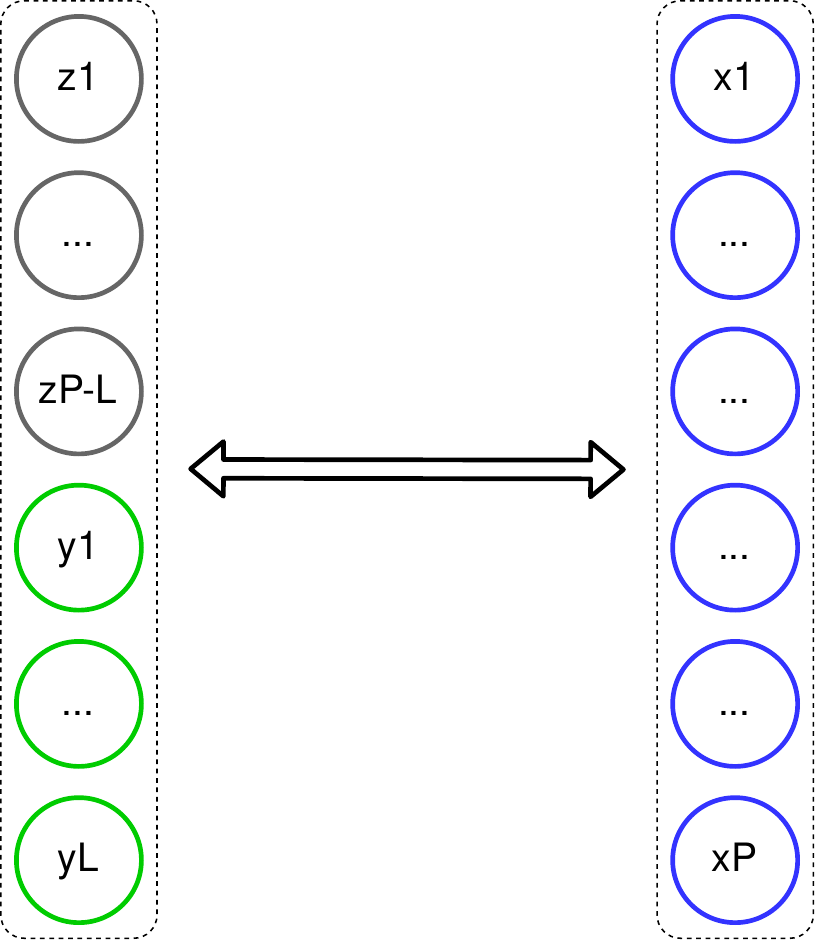}
                \caption{Standard CFM pairing: $[z;y]\to x$ (ordering-sensitive).}
                \label{fig:standard-cfm}
            \end{subfigure}\hfill
            \begin{subfigure}[b]{0.4\linewidth}
                \centering
                \includegraphics[width=0.75\linewidth]{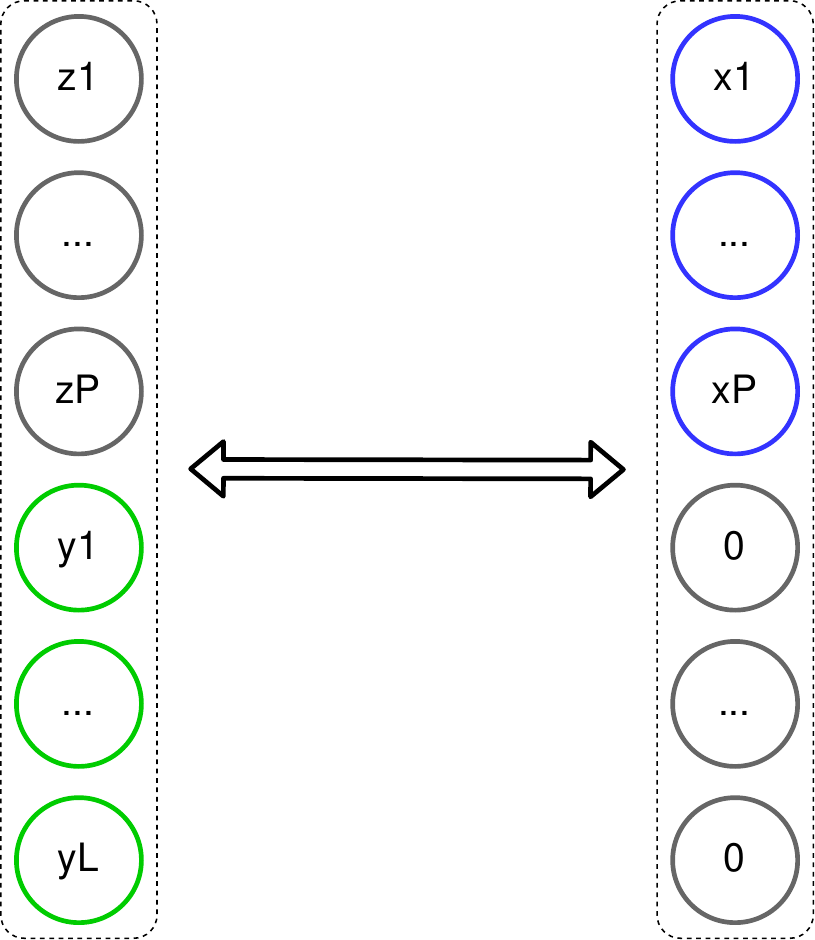}
                \caption{Diag--CFM pairing: $[z;y]\to[x;0]$ (ordering-robust).}
                \label{fig:diagonal-cfm}
            \end{subfigure}
            \caption{Per-coordinate targets in the flow-matching loss. Diag--CFM anchors labels to zero and pairs designs with latent noise, removing spurious dependence on coordinate ordering and scale.}
        \end{figure}
        \begin{figure}[t]
            \centering
            \begin{subfigure}[b]{1\linewidth}
                \centering
                \includegraphics[width=0.8\linewidth]{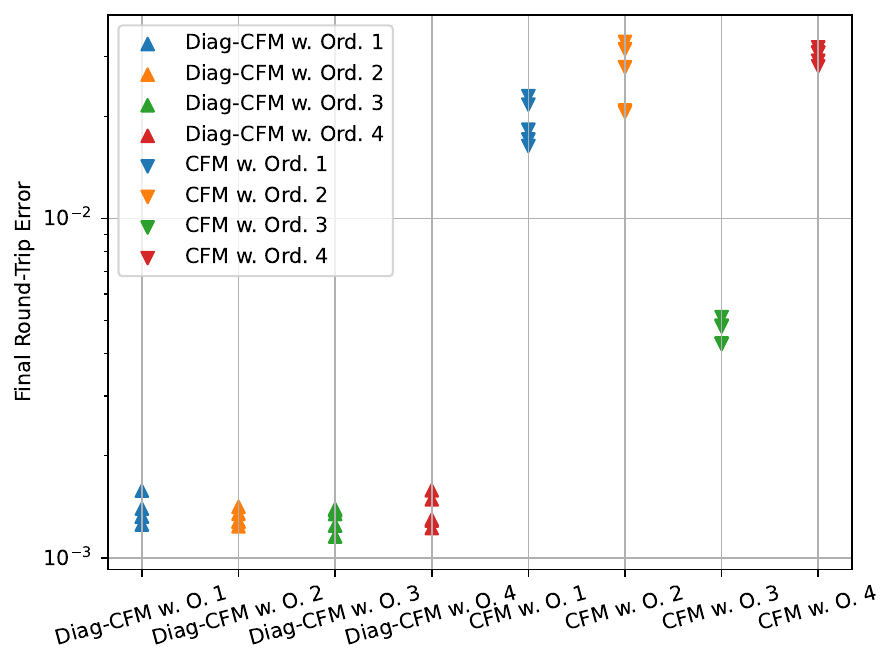}
            \end{subfigure}
            \caption{Final round-trip error (log scale) for training runs on the gas turbine dataset with different parameter orderings, comparing Diag--CFM and standard CFM (5 runs per case). CFM performance varies significantly with ordering while Diag--CFM remains stable. The gas turbine dataset is already scale-normalized to $[0, 1]$, so the observed sensitivity is attributable purely to coordinate ordering; for datasets with heterogeneous scales, the effect would likely be more pronounced. Numerical values in Appendix~\ref{appendix:ablation}.}
            \label{fig:cfm-diagcfm-ablation}
        \end{figure}

    \subsection{Uncertainty Quantification}
        Our generator produces designs for any requested specification $y^\star$, including unfeasible or out-of-distribution targets. We therefore require a metric indicating how confident the model is that a produced design $x$ will actually satisfy the requested performance.
        \paragraph{Ensemble Methods}
        Deep ensembles approximate epistemic uncertainty by training multiple models with different initializations (and optionally data resampling) and measuring the dispersion of their predictions. In our setting, let $\{\theta_m\}_{m=1}^M$ denote an ensemble of Diag--CFM models trained independently; we use $M{=}5$ in all experiments. We select one member, $\theta_{\mathrm{ref}}$, as the \emph{reference} generator.

        \emph{Generation.}
        Given a target $y^\star$ and a latent $z\!\sim\!p_B$, the reference model synthesizes a candidate design $x_{\mathrm{ref}} = T_{\theta_{\mathrm{ref}}}(z, y^\star)$.

        \emph{Evaluation by the ensemble.}
        Each ensemble member evaluates the same design using its analysis (forward) direction $\hat y^{(m)} = \hat y_{\theta_m}(x_{\mathrm{ref}})\in\mathbb{R}^L$ for $m=1,\dots,M$.
        Let $\bar y = \frac{1}{M}\sum_m \hat y^{(m)}$ be the ensemble mean.
        We compute the sample covariance $\widehat{\Sigma}_{\text{ens}}(x_{\mathrm{ref}}) = \frac{1}{M-1}\sum_{m=1}^M\!\big(\hat y^{(m)}-\bar y\big)\big(\hat y^{(m)}-\bar y\big)^\top$ and define a scalar \emph{uncertainty score} as the total variance in a standardized label space: $u_{\text{ens}}(x_{\mathrm{ref}}) = \mathrm{tr}\!\left(D_y^{-1}\,\widehat{\Sigma}_{\text{ens}}(x_{\mathrm{ref}})\right)$, where $D_y=\mathrm{diag}(\sigma_1^2,\ldots,\sigma_L^2)$ contains per-label variances estimated on the training set (or set $D_y=I$ if labels are pre-standardized).


        \paragraph{Flow Matching Loss.}
        Following the intuition that out-of-distribution samples exhibit larger deviations from learned dynamics \cite{mahmood2020multiscale}, we include the flow matching loss as a baseline uncertainty metric, measuring how straight a trajectory the model's velocity field produces for a given generated sample. We evaluate the Diag--CFM loss \eqref{eq:diag-cfm-loss} at a fixed interpolation $t{=}0.5$: $u_{\mathrm{FM}}(x_{\mathrm{ref}}) = \|v_\theta(0.5, s_{0.5}^{\mathrm{diag}}) - u^{\mathrm{diag}}\|_2^2$. Higher values indicate the model is poorly calibrated for this sample.

        \paragraph{Architecture-Specific Methods}
        The Diag--CFM architecture enables two additional uncertainty metrics that exploit its zero-anchoring structure, requiring only a single trained model and no ensemble.

        \emph{Zero-Deviation.}
        Recall that in Diag--CFM, synthesis integrates from $s(0)=[z;\,y^\star]$ to $s(1)$, where the target state is $[x;\,0_L]$. A well-trained model should produce outputs whose last $L$ components are near zero. We define the \emph{zero-deviation} uncertainty as $u_{\mathrm{zero}}(x_{\mathrm{ref}}) = \big\|\,[s(1)]_{P+1:P+L}\,\big\|_2^2$,
        where $[s(1)]_{P+1:P+L}$ denotes the last $L$ components of the synthesis output. Larger deviations indicate that the flow struggled to map the request $(y^\star, z)$ onto the learned data manifold, signaling epistemic uncertainty. Crucially, this metric is computed as a \emph{byproduct} of generation with no additional forward passes.

        \emph{Self-Consistency.}
        A reliable generated design should, when passed back through the analysis direction, reconstruct the original target labels. Since the flow is an ODE, it is invertible up to numerical precision---na\"ively running analysis on the synthesis output would yield a trivial round-trip. The key is that we \emph{discard} the approximate zeros produced by synthesis and replace them with exact zeros: given $x_{\mathrm{ref}}$ from synthesis output $[x_{\mathrm{ref}};\,\tilde 0]$, we form the analysis input $s(1)=[x_{\mathrm{ref}};\,0_L]$ and integrate backward to $t{=}0$, obtaining $s(0)=[\tilde z;\,\hat y_{\mathrm{rec}}]$. The \emph{self-consistency} uncertainty is $u_{\mathrm{sc}}(x_{\mathrm{ref}}) \;=\; \big\|\,\hat y_{\mathrm{rec}} - y^\star\,\big\|_2^2$.
        This metric tests whether the generated design is internally consistent with the model's learned forward mapping: a design that truly achieves $y^\star$ should reconstruct it when analyzed. Large reconstruction errors indicate the model is uncertain about the validity of the generated solution. This requires one additional forward pass through the model.

        Both metrics leverage Diag--CFM's zero-anchoring architecture: zero-deviation probes the quality of the synthesis endpoint, while self-consistency probes round-trip coherence. Unlike ensemble methods, they operate on a single model. And unlike the flow matching loss zero-deviation requires no extra model evaluations.

\section{Experiments}
    \label{sec:experiments}
    We evaluate Diag--CFM against standard CFM and Invertible Neural Network (INN) baselines on five datasets of increasing complexity: gas turbine combustor ($P{=}6$), airfoil aerodynamics ($P{=}14$), a scalable multi-objective benchmark ($P$ up to $100$), thin film optical coating ($P{=}128$), and FashionMNIST image statistics ($P{=}784$). All metrics are averaged over 5 runs. Implementation details are in Appendix~\ref{appendix:implementation}.

    \paragraph{Evaluation Metrics.}
    We assess model performance using three complementary metrics:
    \begin{itemize}[nosep]
        \item \textbf{Forward MSE}: Mean squared error between true labels and predicted labels via the model's forward (analysis) pass: $\text{MSE} = \mathbb{E}[\|y - \hat{y}_\theta(x)\|^2]$. This measures the model's accuracy as a fast surrogate for the expensive forward simulator.
        \item \textbf{Round-Trip Error}: For target labels $y^*$, generate designs $x_{\text{gen}}$ via the inverse pass, then evaluate through a ground-truth forward function $f$: $\text{RT} = \mathbb{E}[\|y^* - f(x_{\text{gen}})\|^2]$. This measures whether generated designs actually achieve the requested performance specifications. For the Gas Turbine Combustor and Unifoil datasets, $f$ is a pre-trained surrogate model; for DTLZ, thin film, and FashionMNIST, $f$ is the exact analytical mapping, enabling error-free evaluation.
        \item \textbf{Design Diversity}: The variance of design parameters across multiple samples generated for the same target label, averaged over design dimensions: $\text{Div} = \mathbb{E}_{y^*}[\text{Var}_{z}[x_{\text{gen}}(y^*, z)]]$. Higher diversity indicates the model captures the multimodality of the solution space rather than collapsing to a single mode.
    \end{itemize}
    For the two datasets evaluated with learned forward surrogates, Appendix~\ref{appendix:surrogate_robustness} reports additional robustness checks under surrogate-output noise and independent Unifoil surrogate architectures.

    \subsection{Gas Turbine Combustor}
        The gas turbine combustor dataset~\cite{dataset_gas_turbine} contains parameterized combustor geometries with $P{=}6$ design parameters and $L{=}3$ performance labels from CFD simulations (details in Appendix~\ref{appendix:datasets}). We use the augmented version of the dataset, containing 200,000 samples generated via calibrated surrogate models. These surrogates serve as the ground truth for our evaluation, having been validated against RANS CFD simulations with mean absolute errors consistently below 0.03 across all normalized performance labels. Diag-CFM, CFM and INN models all have approximately 2.1M parameters for fair comparison.

        \textbf{Forward Performance.} For the forward prediction task (design $\rightarrow$ labels), Diag--CFM achieves 6.5$\times$ and 12$\times$ lower MSE than INN and CFM respectively, demonstrating that zero-anchoring improves both training stability (Figure~\ref{fig:cfm-diagcfm-ablation}) and predictive accuracy.

        \begin{table}[t]
    \caption{Performance comparison on Gas Turbine Combustor dataset. All models have approximately 2.1M parameters. Forward MSE measures prediction accuracy. Round-trip error measures MSE between target labels and surrogate-predicted labels of generated designs. Design diversity measures mean variance of design parameters across generated samples for the same target. All values over 5 runs. Best values are in bold.}
    \label{table:gas_turbine_combustor_combined}
    \centering
    \setlength{\tabcolsep}{4pt}
    \resizebox{\columnwidth}{!}{%
    \begin{sc}
        \begin{tabular}{lccc}
        \toprule
        Metric & INN & CFM & Diag-CFM \\
        \midrule
        Parameters & $2.13 \times 10^{6}$ & $2.11 \times 10^{6}$ & $2.12 \times 10^{6}$ \\
        Forward MSE & $9.09 \times 10^{-3}$ & $1.70 \times 10^{-2}$ & $\mathbf{1.39 \times 10^{-3}}$ \\
        & \scriptsize $\pm 1.09 \times 10^{-3}$ & \scriptsize $\pm 0.39 \times 10^{-2}$ & \scriptsize $\pm 0.59 \times 10^{-3}$ \\
        \addlinespace
        Round-trip Error & $1.62 \times 10^{-2}$ & $1.85 \times 10^{-2}$ & $\mathbf{1.27 \times 10^{-3}}$ \\
        & \scriptsize $\pm 0.13 \times 10^{-2}$ & \scriptsize $\pm 0.23 \times 10^{-2}$ & \scriptsize $\pm 0.06 \times 10^{-3}$ \\
        \addlinespace
        Design Diversity & $\mathbf{5.27 \times 10^{-2}}$ & $3.22 \times 10^{-2}$ & $2.95 \times 10^{-2}$ \\
        & \scriptsize $\pm 0.11 \times 10^{-2}$ & \scriptsize $\pm 0.14 \times 10^{-2}$ & \scriptsize $\pm 0.07 \times 10^{-2}$ \\
        \bottomrule
        \end{tabular}
    \end{sc}
    } 
    \vskip -0.1in
\end{table}

        \textbf{Inverse Performance.} Diag--CFM achieves over an order of magnitude lower round-trip error than both INN and CFM, while maintaining comparable design diversity.

        \textbf{Accuracy-Conditioned Diversity.} Raw diversity metrics can be misleading: a model generating inaccurate designs may exhibit high diversity simply because generated samples do not correspond to the target performance. To address this, we analyze how design diversity changes when filtering by round-trip error accuracy. Figure~\ref{fig:diversity_vs_epsilon_gas_turbine} shows diversity as a function of the error threshold $\varepsilon$---only designs with round-trip error below $\varepsilon$ contribute to the diversity computation. At strict thresholds ($\varepsilon < 10^{-2}$), Diag--CFM maintains substantial diversity while INN and CFM have few or no valid samples. As the threshold relaxes, INN's diversity increases but only because it includes less accurate designs. This analysis reveals that Diag--CFM's lower raw diversity score reflects its concentration on accurate solutions rather than a limitation in exploration capability.

        \begin{figure}[t]
            \centering
            \includegraphics[width=\linewidth]{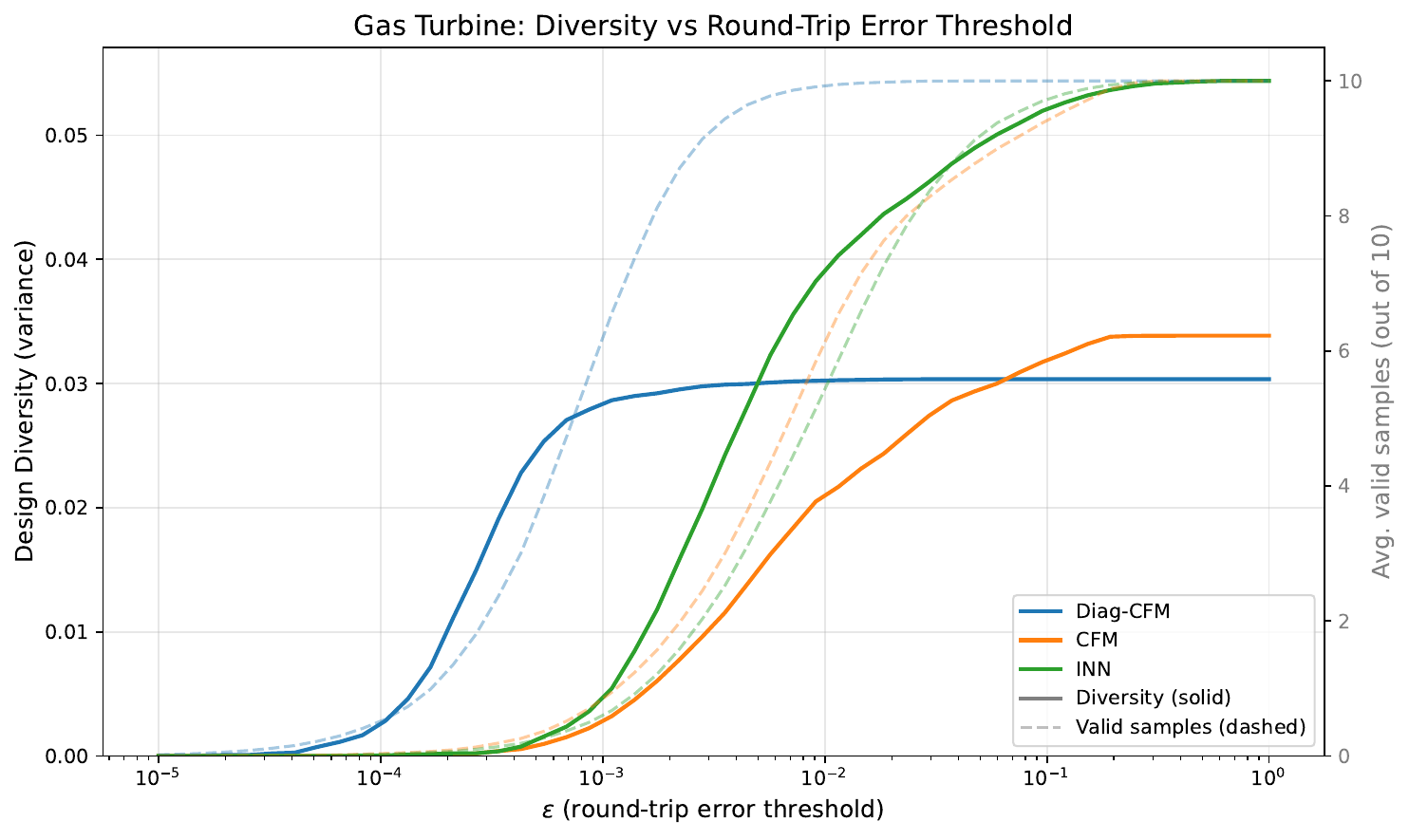}
            \caption{Design diversity as a function of round-trip error threshold $\varepsilon$ for the gas turbine combustor dataset. Solid lines show mean design diversity (variance) computed only over samples with error $< \varepsilon$; dashed lines show the average number of valid samples. Diag--CFM (blue) maintains high diversity at strict accuracy thresholds where INN and CFM have few valid samples, demonstrating that its designs are both accurate and diverse.}
            \label{fig:diversity_vs_epsilon_gas_turbine}
        \end{figure}

    \subsection{Unifoil}
        The Unifoil dataset~\cite{dataset_unifoil} contains RANS simulations of airfoils parameterized by a $P{=}14$-dimensional POD basis, with $L{=}3$ aerodynamic coefficients as labels, conditioned on angle of attack and Mach number (details in Appendix~\ref{appendix:datasets}). For round-trip error evaluation, we trained a neural network surrogate mapping design and physical parameters to aerodynamic coefficients, achieving validation MSE of $2.5 \times 10^{-5}$. Diag-CFM, CFM and INN models all have approximately 2.1M parameters for fair comparison.

        \begin{table}[t]
    \caption{Performance comparison on Unifoil dataset. All models have approximately 2.1M parameters. Forward MSE measures prediction accuracy. Round-trip error measures MSE between target labels and surrogate-predicted labels of generated designs. Design diversity measures mean variance of design parameters across generated samples for the same target. All values over 5 runs. Best values are in bold.}
    \label{table:unifoil_combined}
    \centering
    \setlength{\tabcolsep}{4pt}
    \resizebox{\columnwidth}{!}{%
    \begin{sc}
        \begin{tabular}{lccc}
        \toprule
        Metric & INN & CFM & Diag-CFM \\
        \midrule
        Parameters & $2.18 \times 10^{6}$ & $2.13 \times 10^{6}$ & $2.14 \times 10^{6}$ \\
        Forward MSE & $4.13 \times 10^{-3}$ & $9.66 \times 10^{-3}$ & $\mathbf{1.43 \times 10^{-3}}$ \\
        & \scriptsize $\pm 1.29 \times 10^{-3}$ & \scriptsize $\pm 3.08 \times 10^{-3}$ & \scriptsize $\pm 0.54 \times 10^{-3}$ \\
        \addlinespace
        Round-trip Error & $3.82 \times 10^{-3}$ & $1.10 \times 10^{-2}$ & $\mathbf{1.68 \times 10^{-3}}$ \\
        & \scriptsize $\pm 1.93 \times 10^{-3}$ & \scriptsize $\pm 0.25 \times 10^{-2}$ & \scriptsize $\pm 0.94 \times 10^{-3}$ \\
        \addlinespace
        Design Diversity & $\mathbf{1.44 \times 10^{-2}}$ & $5.35 \times 10^{-3}$ & $1.71 \times 10^{-3}$ \\
        & \scriptsize $\pm 2.19 \times 10^{-2}$ & \scriptsize $\pm 0.69 \times 10^{-3}$ & \scriptsize $\pm 0.52 \times 10^{-3}$ \\
        \bottomrule
        \end{tabular}
    \end{sc}
    } 
    \vskip -0.1in
\end{table}

        \textbf{Forward Performance.} Diag--CFM achieves 2.9$\times$ and 6.8$\times$ lower forward MSE than INN and CFM respectively, confirming that zero-anchoring benefits extend to conditional generation with physical parameters (angle of attack, Mach number).

        \textbf{Inverse Performance.} Diag--CFM achieves the lowest round-trip error, outperforming INN by 2.3$\times$ and CFM by 6.5$\times$. CFM generates the most diverse designs, though accuracy-conditioned analysis (Appendix~\ref{appendix:accuracy_diversity}) shows Diag--CFM maintains competitive diversity at strict accuracy thresholds.

    \subsection{DTLZ Benchmark}
        To evaluate scalability beyond the gas turbine ($P{=}6$) and Unifoil ($P{=}14$) datasets, we employ the DTLZ2 test function~\cite{deb2005scalable}, a multi-objective optimization benchmark where design dimension $P$ can be scaled arbitrarily. Crucially, the forward mapping is analytical, enabling exact round-trip error computation without surrogate model approximation. We fix $L{=}3$ objectives and vary $P \in \{12, 24, 50, 100\}$ (details in Appendix~\ref{appendix:datasets}).

        \begin{table}[t]
    \caption{DTLZ2 benchmark results comparing INN, CFM, and Diag-CFM across design dimensions ($L{=}3$ objectives fixed). Forward MSE measures prediction accuracy (design $\rightarrow$ labels). Round-trip error measures inverse design quality using the analytical ground truth. All values over 5 runs. Best values per dimension are in bold.}
    \label{table:dtlz_performance}
    \centering
    \setlength{\tabcolsep}{4pt}
    \resizebox{0.7\columnwidth}{!}{%
    \begin{sc}
        \begin{tabular}{lccc}
        \toprule
        P & INN & CFM & Diag-CFM \\
        \midrule
        \multicolumn{4}{c}{Forward MSE} \\
        \midrule
        12 & $2.01 \times 10^{-2}$ & $2.87 \times 10^{-2}$ & $\mathbf{1.69 \times 10^{-3}}$ \\
        & \scriptsize $\pm 0.06 \times 10^{-2}$ & \scriptsize $\pm 0.17 \times 10^{-2}$ & \scriptsize $\pm 0.17 \times 10^{-3}$ \\
        \addlinespace
        24 & $4.58 \times 10^{-3}$ & $3.28 \times 10^{-2}$ & $\mathbf{3.62 \times 10^{-3}}$ \\
        & \scriptsize $\pm 0.29 \times 10^{-3}$ & \scriptsize $\pm 0.14 \times 10^{-2}$ & \scriptsize $\pm 0.51 \times 10^{-3}$ \\
        \addlinespace
        50 & $\mathbf{2.10 \times 10^{-3}}$ & $3.36 \times 10^{-2}$ & $6.83 \times 10^{-3}$ \\
        & \scriptsize $\pm 0.82 \times 10^{-3}$ & \scriptsize $\pm 0.20 \times 10^{-2}$ & \scriptsize $\pm 1.83 \times 10^{-3}$ \\
        \addlinespace
        100 & $\mathbf{1.22 \times 10^{-3}}$ & $4.17 \times 10^{-2}$ & $9.58 \times 10^{-3}$ \\
        & \scriptsize $\pm 0.21 \times 10^{-3}$ & \scriptsize $\pm 0.72 \times 10^{-2}$ & \scriptsize $\pm 1.15 \times 10^{-3}$ \\
        \midrule
        \multicolumn{4}{c}{Round-trip Error} \\
        \midrule
        12 & $3.76 \times 10^{-2}$ & $3.20 \times 10^{-2}$ & $\mathbf{9.10 \times 10^{-4}}$ \\
        & \scriptsize $\pm 0.26 \times 10^{-2}$ & \scriptsize $\pm 0.60 \times 10^{-2}$ & \scriptsize $\pm 0.79 \times 10^{-4}$ \\
        \addlinespace
        24 & $7.96 \times 10^{-2}$ & $3.39 \times 10^{-2}$ & $\mathbf{1.44 \times 10^{-3}}$ \\
        & \scriptsize $\pm 0.22 \times 10^{-2}$ & \scriptsize $\pm 0.36 \times 10^{-2}$ & \scriptsize $\pm 0.06 \times 10^{-3}$ \\
        \addlinespace
        50 & $9.39 \times 10^{-2}$ & $3.58 \times 10^{-2}$ & $\mathbf{2.62 \times 10^{-3}}$ \\
        & \scriptsize $\pm 0.36 \times 10^{-2}$ & \scriptsize $\pm 0.47 \times 10^{-2}$ & \scriptsize $\pm 0.25 \times 10^{-3}$ \\
        \addlinespace
        100 & $9.55 \times 10^{-2}$ & $4.52 \times 10^{-2}$ & $\mathbf{5.26 \times 10^{-3}}$ \\
        & \scriptsize $\pm 0.19 \times 10^{-2}$ & \scriptsize $\pm 0.91 \times 10^{-2}$ & \scriptsize $\pm 0.71 \times 10^{-3}$ \\
        \bottomrule
        \end{tabular}
    \end{sc}
    } 
    \vskip -0.2in
\end{table}

        \textbf{Forward Performance.} For the forward prediction task (design $\rightarrow$ objectives), Diag--CFM achieves the lowest mean squared error at lower dimensions ($P{=}12$: $1.69 \times 10^{-3}$, $P{=}24$: $3.62 \times 10^{-3}$), outperforming both INN and CFM by an order of magnitude. At higher dimensions ($P{=}50, 100$), INN achieves slightly lower forward MSE ($\sim 2 \times 10^{-3}$) than Diag--CFM, which may be a product of the nature of this dataset. Standard CFM consistently shows the worst forward performance across all dimensions, with MSE remaining above $2.8 \times 10^{-2}$.

        \textbf{Inverse Performance.} Critically, for the inverse design task, Diag--CFM achieves substantially lower round-trip error across all dimensions, demonstrating superior inverse mapping quality. At $P{=}12$, Diag--CFM achieves $9.10 \times 10^{-4}$, outperforming CFM ($3.20 \times 10^{-2}$) by 35$\times$ and INN ($3.76 \times 10^{-2}$) by 42$\times$. This advantage persists at high dimensions: at $P{=}100$, Diag--CFM achieves $5.26 \times 10^{-3}$ compared to CFM ($4.52 \times 10^{-2}$) and INN ($9.55 \times 10^{-2}$). The increasing round-trip error for INN at high dimensions ($\sim 10^{-1}$ at $P{=}100$) suggests that while INN can learn a good forward mapping, its inverse mapping degrades significantly with dimension. This highlights Diag--CFM's strength in maintaining bidirectional consistency, which is essential for generative design applications.

        \textbf{Design Diversity.} Raw diversity scores are comparable across all three models, with INN showing slightly higher values (see Table~\ref{table:dtlz_diversity} in Appendix~\ref{appendix:accuracy_diversity}). However, accuracy-conditioned analysis (Appendix~\ref{appendix:accuracy_diversity}) reveals that Diag--CFM reaches full valid samples at substantially lower $\varepsilon$ values than CFM or INN across all dimensions, confirming that its designs are both accurate and diverse---whereas the higher raw diversity of competing methods largely stems from inaccurate samples.

    \subsection{Thin Film Optical Coating}
        To further test scalability with exact ground truth, we add a thin film optical coating benchmark, a canonical inverse problem in photonics~\cite{ma2025opticalmultilayer}. Designs are stacks of $P{=}128$ alternating dielectric layers (SiO$_2$/TiO$_2$) with thicknesses in $[10,300]$ nm, and labels are $L{=}64$ reflectance values over wavelengths from 400--800 nm. The forward map is the Transfer Matrix Method (TMM)~\cite{luce2022tmmfast}, an analytical computation involving trigonometric phase accumulation, matrix chain multiplication across layers, and modulus-squared reflectance. Round-trip error is therefore evaluated without a learned surrogate.

        \begin{table}[t]
    \caption{Thin Film Optical Coating benchmark results ($P{=}128$, $L{=}64$). Forward MSE measures prediction accuracy for reflectance spectra. Round-trip error measures inverse design quality using the exact TMM forward map. All values are mean $\pm$ standard deviation over 5 runs. Best values are in bold.}
    \label{table:thin_film_performance}
    \centering
    \setlength{\tabcolsep}{4pt}
    \resizebox{\columnwidth}{!}{%
    \begin{sc}
        \begin{tabular}{lccc}
        \toprule
        Method & Parameters & Forward MSE & Round-trip Error \\
        \midrule
        Diag-CFM & $3.54 \times 10^{6}$ & $\mathbf{1.104 \times 10^{-2}}$ & $\mathbf{2.247 \times 10^{-2}}$ \\
        & & \scriptsize $\pm 0.001 \times 10^{-2}$ & \scriptsize $\pm 0.032 \times 10^{-2}$ \\
        \addlinespace
        CFM & $3.41 \times 10^{6}$ & $1.905 \times 10^{-2}$ & $2.902 \times 10^{-2}$ \\
        & & \scriptsize $\pm 0.009 \times 10^{-2}$ & \scriptsize $\pm 0.026 \times 10^{-2}$ \\
        \addlinespace
        INN & $3.69 \times 10^{6}$ & $2.384 \times 10^{-2}$ & $3.114 \times 10^{-2}$ \\
        & & \scriptsize $\pm 0.020 \times 10^{-2}$ & \scriptsize $\pm 0.015 \times 10^{-2}$ \\
        \bottomrule
        \end{tabular}
    \end{sc}
    }
    \vskip -0.15in
\end{table}

        Diag--CFM achieves the lowest forward MSE and round-trip error. Compared with CFM, it reduces forward MSE by $1.7\times$ and round-trip error by $1.3\times$; compared with INN, the reductions are $2.2\times$ and $1.4\times$. This benchmark is higher-dimensional than the DTLZ settings where INN has the best forward MSE, yet Diag--CFM is best in both directions, suggesting that the DTLZ forward-prediction exception is not a generic high-dimensional failure mode.

    \subsection{FashionMNIST Image Statistics}
        To test whether the method extends beyond MLP-based parametric design spaces, we add a high-dimensional image benchmark based on FashionMNIST~\cite{xiao2017fashionmnist}. Each design is a $28{\times}28$ grayscale garment image with $P{=}784$ pixel degrees of freedom. Labels are $L{=}5$ exact nonlinear image statistics: mean intensity, gradient energy, variance of local patch variances, pixel kurtosis, and log contrast. The inverse task is many-to-one: many distinct garment images can share the same image-statistics vector. Unlike the previous MLP experiments, Diag--CFM and CFM use a CNN velocity network with U-Net-style encoder-decoder structure and bottleneck self-attention (details in Appendix~\ref{appendix:implementation}).

        \begin{table}[t]
    \caption{FashionMNIST image-statistics benchmark results ($P{=}784$, $L{=}5$). Forward MSE measures prediction accuracy for normalized nonlinear image statistics. Round-trip error measures inverse quality using the exact analytical image-statistics forward map. All values are mean $\pm$ standard deviation over 5 runs. Best values are in bold.}
    \label{table:fashionmnist_performance}
    \centering
    \setlength{\tabcolsep}{4pt}
    \resizebox{\columnwidth}{!}{%
    \begin{sc}
        \begin{tabular}{lccc}
        \toprule
        Method & Parameters & Forward MSE & Round-trip Error \\
        \midrule
        Diag-CFM & $4.31 \times 10^{6}$ & $\mathbf{7.20 \times 10^{-4}}$ & $\mathbf{9.15 \times 10^{-3}}$ \\
        & & \scriptsize $\pm 1.18 \times 10^{-4}$ & \scriptsize $\pm 2.64 \times 10^{-3}$ \\
        \addlinespace
        CFM & $4.24 \times 10^{6}$ & $2.56 \times 10^{-2}$ & $1.35 \times 10^{-2}$ \\
        & & \scriptsize $\pm 4.72 \times 10^{-3}$ & \scriptsize $\pm 5.57 \times 10^{-3}$ \\
        \addlinespace
        INN & $5.33 \times 10^{6}$ & $1.77 \times 10^{-3}$ & $2.73 \times 10^{-2}$ \\
        & & \scriptsize $\pm 3.46 \times 10^{-4}$ & \scriptsize $\pm 3.19 \times 10^{-3}$ \\
        \bottomrule
        \end{tabular}
    \end{sc}
    }
    \vskip -0.15in
\end{table}

        Diag--CFM achieves the best forward and inverse performance at $P{=}784$. Relative to CFM, it reduces forward MSE by $36\times$ and round-trip error by $1.5\times$; relative to INN, it reduces forward MSE by $2.5\times$ and round-trip error by $3.0\times$. This extends the empirical evidence beyond low-dimensional vector-valued design spaces: the same ranking holds for a pixel-space inverse problem using a spatial CNN architecture and exact analytical labels.

    \subsection{Uncertainty Quantification}
        \label{sec:uq_experiments}
        We evaluate our four uncertainty metrics on three practical tasks. Full results are presented in Appendix~\ref{appendix:uq_results}.

        \paragraph{Select-Best.} For selecting the best candidate among $K{=}10$ generations per target, the Diag--CFM specific metrics (Zero-Deviation, Self-Consistency) consistently outperform baseline methods, achieving 5--31\% improvement over random selection. Baseline metrics (Ensemble Variance, FM Loss) often perform \emph{worse} than random selection due to weak or negative correlation with generation quality.

        \paragraph{Error-Rejection.} In deployment, it is often preferable to abstain from a prediction rather than return an unreliable design. We test whether uncertainty metrics can identify which generations to reject: for each of 1,000 target labels, we generate one design, rank all samples by uncertainty, and progressively reject the most uncertain fraction. A useful metric should yield lower mean error on the retained samples as the rejection rate increases. At 20\% rejection, Zero-Deviation achieves 8--27\% error reduction across datasets, with Self-Consistency performing comparably. FM Loss provides minimal improvement (3--7\%), with Ensemble Variance even increasing error at high dimensions.

        \paragraph{Out-of-Distribution Detection.} We generate OOD targets by sampling label-space points whose nearest training neighbor lies at 2--8\% of each dimension's range---close enough to the boundary to make detection challenging. Zero-Deviation consistently achieves the best AUC scores: 0.96 (Unifoil), 0.73--0.82 (DTLZ), and 0.75 (Gas Turbine). Self-Consistency follows with AUCs of 0.87, 0.51--0.77, and 0.69 respectively. Baseline metrics show mixed results: on Unifoil, FM Loss (0.89) performs well, but on Gas Turbine and DTLZ it achieves only 0.59--0.69. Ensemble Variance is consistently the weakest (0.62--0.85). That Zero-Deviation matches or exceeds all alternatives is particularly significant as it is always computed as a byproduct of generation. Appendix~\ref{appendix:ood} adds physically grounded Unifoil OOD tests where Zero-Deviation exceeds 0.98 AUC on negative drag, extreme lift-to-drag ratio, below-drag-polar targets, and extrapolated Mach/AoA conditioning shifts. Appendix~\ref{appendix:ode_solver_sensitivity} shows that this signal is stable across a range of ODE solvers.

\section{Conclusion}
    \label{sec:conclusion}

    We introduced Diagonal Flow Matching (Diag--CFM), a zero-anchoring strategy for conditional flow matching that yields a permutation-invariant learning problem with respect to design and label coordinates. This simple modification eliminates the sensitivity to coordinate ordering present in standard CFM, yielding substantially improved round-trip accuracy across all tested datasets. Crucially, this accuracy does not come at the cost of mode collapse: we demonstrated that Diag--CFM maintains high design diversity even at strict error thresholds where baseline models fail to produce valid candidates.

    For uncertainty quantification, we developed two architecture-specific metrics---Zero-Deviation and Self-Consistency---that consistently outperform ensemble-based and flow matching loss alternatives. Zero-Deviation is particularly attractive as it is computed as a byproduct of generation, offering reliable epistemic uncertainty estimation without the computational overhead of training multiple ensemble members.

    These contributions enable a key capability for deploying generative design in engineering practice: the ability to flag uncertain predictions rather than silently producing unreliable designs. This is especially valuable when downstream decisions (e.g. manufacturing, deployment) are costly.

    \paragraph{Limitations.} While our UQ metrics substantially improve upon baselines, the gap to oracle performance (34--57\% for error-rejection) indicates room for improvement. Additionally, while Diag--CFM excels at inverse generation, we observed that for forward prediction on DTLZ at high dimensions ($P{\ge}50$), standard INN baselines remain competitive. Future work might explore hybrid architectures to capture the strengths of both approaches.

    \paragraph{Future Work.} Promising directions include integrating Diag--CFM into active learning loops where uncertainty guides data acquisition to resolve high-error regions efficiently. Finally, given the method's effective scaling to high-dimensional spaces, a natural next step is to apply Diag--CFM to inverse problems with larger parameter sets, such as topology optimization or molecular discovery.

\section*{Acknowledgements}
    M.C.\ acknowledges financial support from the German Federal Ministry for Economic Affairs through grant no.\ 03EE5186B and from Siemens Energy. H.G.\ acknowledges financial support from the German Research Foundation (DFG) through SPP 2403 ``Carnot Batteries''.

\section*{Impact Statement}
    This paper presents work whose goal is to advance the field of Machine Learning. There are many potential societal consequences of our work, none which we feel must be specifically highlighted here.

\bibliography{paper}
\bibliographystyle{icml2026}

\newpage
\appendix
    \onecolumn
    \section{Proofs}
        \label{appendix:proofs}

        \begin{proof}[Proof of Proposition~\ref{prop:diag-cfm-equivariant}]
            Let $\Pi_P \in \mathbb{R}^{P,P}$ and $\Pi_L \in \mathbb{R}^{L,L}$ be permutation matrices, and $\Pi \coloneqq \text{diag}(\Pi_P, \Pi_L) = \begin{bsmallmatrix} \Pi_P & 0 \\ 0 & \Pi_L\end{bsmallmatrix}$, then
            \[
                \begin{bmatrix}
                    \Pi_P X - Z\\
                    - \Pi_L Y
                \end{bmatrix} = \begin{bmatrix}
                    \Pi_P (X - \Pi_P^{-1}Z)\\
                    - \Pi_L Y
                \end{bmatrix} = \Pi \begin{bmatrix}
                    X - \Pi_P^{-1}Z\\
                    - Y
                \end{bmatrix}.
            \]
            Because the entries of $Z$ are i.i.d., we have that $\begin{bmatrix}
                    \Pi_P X - Z\\
                    - \Pi_L Y
                \end{bmatrix}$ and $\Pi \begin{bmatrix}
                    X - Z\\
                    -Y
                \end{bmatrix}$ are equal in law.
        \end{proof}

        \begin{proof}[Proof of Proposition~\ref{prop:cfm-not-equivariant}]
            To see this consider as an example $X$ with absolutely continuous i.i.d.\ entries, and $Y \coloneqq [X]_{P-L + 1: P}$. Then
            \[
                U = \begin{bmatrix}
                [X]_{1:P-L} - Z \\
                [X]_{P-L + 1:P} - Y
            \end{bmatrix} = \begin{bmatrix}
                [X]_{1:P-L} - Z \\
                0
            \end{bmatrix},
            \]
            has $L$ entries which are almost surely 0, while for $\Pi_P$ permuting the first with the last entry,
            \[
                U = \begin{bmatrix}
                    [\Pi_PX]_{1:P-L} - Z \\
                    [\Pi_PX]_{P-L + 1:P} - Y
                \end{bmatrix}
            \]
            only has $L-1$ almost surely 0 entries.
        \end{proof}

    \section{CFM vs Diag--CFM Ablation}
        \label{appendix:ablation}

        We compare standard CFM and Diag--CFM under four different parameter orderings on the gas turbine combustor dataset, following the training protocol described in Appendix~\ref{appendix:implementation}. For each ordering, we train 5 models with different random seeds and report the mean and standard deviation of the final round-trip error. Table~\ref{table:cfm-diagcfm-ablation} provides numerical values for the ablation study visualized in Figure~\ref{fig:cfm-diagcfm-ablation}.

        We observe that standard CFM performance is highly dependent on the ordering of parameters, with mean final round-trip error varying by nearly an order of magnitude across orderings ($4.66 \times 10^{-3}$ to $3.02 \times 10^{-2}$). In contrast, Diag--CFM achieves consistent performance ($\sim 1.35 \times 10^{-3}$) regardless of parameter ordering, confirming the permutation-invariance of the learning problem implied by Proposition~\ref{prop:diag-cfm-equivariant}. Moreover, Diag--CFM exhibits substantially lower run-to-run variance \emph{within} each ordering: its standard deviations range from the order of $10^{-4}$ to $10^{-5}$, whereas CFM standard deviations range from $10^{-4}$ to $10^{-3}$, indicating that Diag--CFM training is more stable across random initializations as well. We also note, that the gas turbine dataset is already normalized to $[0,1]$, so the observed sensitivity is attributable purely to coordinate ordering; on datasets with heterogeneous scales, the effect would likely be more pronounced.

        \begin{table}[h]
            \caption{End of training round-trip error (mean $\pm$ std over 5 runs) for gas turbine dataset with different parameter orderings.}
            \label{table:cfm-diagcfm-ablation}
            \begin{center}
                \begin{small}
                    \begin{sc}
                        \begin{tabular}{lcc}
                        \toprule
                        Round-Trip Error & CFM         & Diag--CFM \\
                        \midrule
                        Parameter Ordering 1 & $(1.93 \pm 0.29) \times 10^{-2}$ & $(1.38 \pm 0.12) \times 10^{-3}$ \\
                        Parameter Ordering 2 & $(2.68 \pm 0.59) \times 10^{-2}$ & $(1.31 \pm 0.07) \times 10^{-3}$ \\
                        Parameter Ordering 3 & $(4.66 \pm 0.38) \times 10^{-3}$ & $(1.31 \pm 0.10) \times 10^{-3}$ \\
                        Parameter Ordering 4 & $(3.02 \pm 0.15) \times 10^{-2}$ & $(1.38 \pm 0.15) \times 10^{-3}$ \\
                        \bottomrule
                        \end{tabular}
                    \end{sc}
                \end{small}
            \end{center}
        \end{table}

    \section{Dataset Descriptions}
        \label{appendix:datasets}

        \subsection{Gas Turbine Combustor}

        We use an augmented version of the gas turbine combustor dataset from \cite{dataset_gas_turbine}, who developed a generative design methodology for premixed combustion systems. Each sample is a parameterized combustor (plenum, premixing tube, combustion chamber with central fuel lance), simulated via steady RANS CFD and an acoustic network model to obtain labels.

        The six design parameters are $x = (R_A,\; N_H,\; D_M,\; R_D,\; R_L,\; L_P)$, where $R_A$ is the ratio of free area at the vortex generators to the premixing-tube cross-section, $N_H$ the number of fuel-injection holes on the lance, $D_M$ the premixing-tube diameter, $R_D$ the ratio of lance diameter to premixing-tube diameter, $R_L$ the premixing-tube length-to-diameter ratio, and $L_P$ the combustor plenum length (typical ranges: $R_A\!\in[0.63,0.83]$, $N_H\!\in[2,10]$, $D_M\!\in[20,45]\,$mm, $R_D\!\in[0.35,0.55]$, $R_L\!\in[4,12]$, $L_P\!\in[200,900]\,$mm).

        The three performance labels are $y=(U_M,\; \Delta p_{t,\mathrm{rel}},\; G)$: $U_M$ is the fuel/air unmixedness at the premixing-tube outlet (a mass-flow-weighted normalized standard deviation of mixture fraction), $\Delta p_{t,\mathrm{rel}}$ is the normalized total pressure loss between domain inlet and outlet, and $G$ is the thermoacoustic growth rate, derived from the acoustic network (GENEAC) eigenanalysis.

        \subsection{Unifoil}

        The Unifoil dataset \cite{dataset_unifoil} is a comprehensive, publicly available airfoil dataset based on Reynolds-averaged Navier-Stokes (RANS) simulations. The dataset contains over 500,000 aerodynamic simulations spanning a wide range of flow conditions, including both laminar-turbulent transitional flows and fully turbulent flows across incompressible to compressible flow regimes.

        We work with the fully turbulent subsection of the dataset. Each sample corresponds to a RANS simulation of an airfoil under a given angle of attack and Mach number. Each airfoil is parametrized using a $P{=}14$-dimensional basis obtained via proper orthogonal decomposition (POD) from a diverse collection of airfoils. As performance labels we use the $L{=}3$ values precomputed by the authors for the lift, drag, and moment coefficients. We refer to \cite{dataset_unifoil} for more details.

        For round-trip error evaluation, we trained a neural network surrogate to serve as ground truth. The surrogate is an MLP with 5 hidden layers of width 512, LeakyReLU activations, and layer normalization. It maps the 16-dimensional input (14 design parameters concatenated with angle of attack and Mach number) to the 3 aerodynamic coefficients. Training used the Adam optimizer with learning rate $10^{-3}$, batch size 256, and a ReduceLROnPlateau scheduler. The surrogate achieves a validation MSE of $2.5 \times 10^{-5}$.

        Figure~\ref{fig:airfoil_grids} shows airfoil geometries from the dataset alongside airfoils generated by Diag--CFM. The generated airfoils exhibit realistic and diverse shapes consistent with the dataset distribution.

        \begin{figure}[h]
            \centering
            \begin{subfigure}[b]{\linewidth}
                \centering
                \includegraphics[width=\linewidth]{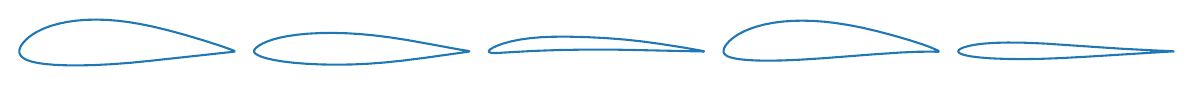}
                \caption{Airfoils from the dataset.}
                \label{fig:airfoil_grid_train}
            \end{subfigure}
            \vspace{0.5em}
            \begin{subfigure}[b]{\linewidth}
                \centering
                \includegraphics[width=\linewidth]{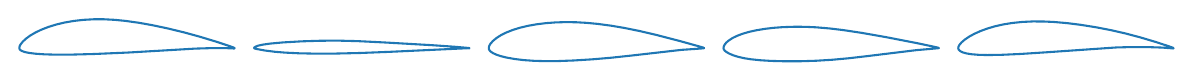}
                \caption{Airfoils generated by Diag--CFM.}
                \label{fig:airfoil_grid_gen_train}
            \end{subfigure}
            \caption{Airfoil geometries reconstructed from the 14-dimensional POD basis. (a)~Airfoils from the Unifoil dataset. (b)~Airfoils generated by Diag--CFM from randomly sampled performance labels and physical conditioning parameters (angle of attack, Mach number). The generated shapes are realistic and diverse, reflecting the model's ability to produce plausible designs for given aerodynamic targets.}
            \label{fig:airfoil_grids}
        \end{figure}

        \subsection{DTLZ Benchmark}

        The DTLZ2 function~\cite{deb2005scalable} is a widely-used benchmark in multi-objective optimization that offers two key advantages: (1)~the design dimension $P$ can be scaled arbitrarily, enabling systematic study of high-dimensional behavior, and (2)~the forward mapping from designs to objectives is analytical, allowing exact round-trip error computation without surrogate model approximation.

        The DTLZ2 function maps design parameters $x \in [0,1]^P$ to $L$ objectives. For a design $x = (x_1, \ldots, x_P)$ with $P \geq L$, let $g(x_M) = \sum_{i=L}^{P}(x_i - 0.5)^2$ where $x_M = (x_L, \ldots, x_P)$ are the ``distance'' parameters. The objectives are:
        \begin{align*}
            f_1(x) &= (1+g)\prod_{i=1}^{L-1}\cos\!\left(\frac{\pi}{2}x_i\right), \\
            f_m(x) &= (1+g)\sin\!\left(\frac{\pi}{2}x_{L-m}\right)\, \prod_{i=1}^{L-1-m}\cos\!\left(\frac{\pi}{2}x_i\right), \quad 2 \le m \le L.
        \end{align*}
        The Pareto-optimal front is the first quadrant of a unit hypersphere when $g=0$ (i.e., $x_i = 0.5$ for $i \geq L$).

        \subsection{Thin Film Optical Coating}

        The thin film benchmark models normal-incidence reflection from a multilayer optical coating. Each design $x \in [0,1]^P$ specifies layer thicknesses, which are mapped to physical thicknesses in $[10,300]$ nm. Layers alternate between SiO$_2$ ($n{=}1.45$) and TiO$_2$ ($n{=}2.30$) on a glass substrate ($n_{\mathrm{sub}}{=}1.5$), with air as the incident medium ($n_0{=}1.0$). We use $P{=}128$ layers and evaluate reflectance at $L{=}64$ uniformly spaced wavelengths between 400 and 800 nm.

        For each wavelength $\lambda$, the phase thickness of layer $k$ is $\delta_k(\lambda)=2\pi n_k d_k/\lambda$, and the layer transfer matrix is
        \[
            M_k(\lambda)=
            \begin{bmatrix}
                \cos\delta_k & -i\sin\delta_k / n_k \\
                -i n_k \sin\delta_k & \cos\delta_k
            \end{bmatrix}.
        \]
        Multiplying the $P$ layer matrices gives the system matrix $M(\lambda)=\prod_{k=1}^{P} M_k(\lambda)$. The reflection coefficient is computed from $M(\lambda)$, $n_0$, and $n_{\mathrm{sub}}$, and the label is the reflectance $R(\lambda)=|r(\lambda)|^2$. This exact analytical forward map is highly nonlinear because the phase terms are periodic in layer thickness, the transfer matrices compound across layers, and the modulus squared discards phase information.

        We generate 50,000 training samples, 10,000 validation samples, and 10,000 test samples using fixed random seeds. Design parameters are sampled uniformly in $[0,1]$ and labels are normalized to $[0,1]$ using training-set per-wavelength min/max statistics. For round-trip evaluation, generated designs are clamped to $[0,1]$ and evaluated with the same TMM forward map; no learned surrogate is used.

        \subsection{FashionMNIST Image Statistics}

        The FashionMNIST benchmark uses the standard FashionMNIST garment images~\cite{xiao2017fashionmnist} as designs. Each design is represented as an image tensor $I \in [0,1]^{1 \times 28 \times 28}$, corresponding to $P{=}784$ pixel degrees of freedom. Instead of using class labels, we define an exact analytical forward map from images to $L{=}5$ normalized image statistics: mean intensity, gradient energy, variance of local $3{\times}3$ patch variances, pixel kurtosis, and log contrast. Table~\ref{table:fashionmnist_statistics} lists the five statistics.

        \begin{table}[h]
            \caption{FashionMNIST image-statistics labels. Let $\Omega$ be the $28{\times}28$ pixel grid, $N=|\Omega|$, $\mu=N^{-1}\sum_{u\in\Omega} I_u$, and $s^2=(N{-}1)^{-1}\sum_{u\in\Omega}(I_u-\mu)^2$. The notation $\operatorname{Var}^{(1)}$ denotes sample variance, matching the implementation. The $3{\times}3$ neighborhoods $\mathcal{N}_3(u)$ use zero padding at image boundaries. The sets $B_{100}$ and $D_{100}$ contain the 100 brightest and 100 darkest pixels, respectively.}
            \label{table:fashionmnist_statistics}
            \centering
            \setlength{\tabcolsep}{6pt}
            \begin{small}
            \begin{tabular}{ll}
                \toprule
                Statistic & Formula \\
                \midrule
                Mean intensity & $\frac{1}{N}\sum_{u\in\Omega} I_u$ \\
                Gradient energy & $\frac{1}{2}\left(\operatorname{mean}_{i,j}(I_{i+1,j}-I_{i,j})^2 + \operatorname{mean}_{i,j}(I_{i,j+1}-I_{i,j})^2\right)$ \\
                Local-variance variance & $\operatorname{Var}^{(1)}_{u\in\Omega}\!\left(\operatorname{Var}^{(1)}_{v\in\mathcal{N}_3(u)} I_v\right)$ \\
                Pixel kurtosis & $\frac{1}{N}\sum_{u\in\Omega}\frac{(I_u-\mu)^4}{\max(s,10^{-8})^4}$ \\
                Log contrast & $\log\!\left(\frac{\frac{1}{100}\sum_{u\in B_{100}} I_u}{\max\!\left(\frac{1}{100}\sum_{u\in D_{100}} I_u,\,10^{-4}\right)}\right)$ \\
                \bottomrule
            \end{tabular}
            \end{small}
        \end{table}

        The gradient-energy label is the average squared finite difference over horizontal and vertical neighbors. The local-variance label computes the sample variance across all zero-padded $3{\times}3$ patch variances. Pixel kurtosis and log contrast use the numerical clamps shown in Table~\ref{table:fashionmnist_statistics}. Labels are normalized to $[0,1]$ using training-set min/max statistics.

        We use 50,000 training images, 10,000 validation images, and 10,000 test images. For round-trip evaluation, generated images are clamped to $[0,1]$ and evaluated with the same analytical image-statistics map; no learned surrogate or classifier is used.

    \section{Surrogate Robustness}
        \label{appendix:surrogate_robustness}

        Gas turbine combustor and Unifoil round-trip evaluation use learned forward surrogates because querying the original high-fidelity simulators for every generated design is impractical. We therefore check whether the method ranking is sensitive to surrogate perturbations or to the particular surrogate architecture.

        First, we inject Gaussian noise into surrogate outputs at levels from 1\% to 20\% of the per-label training-set standard deviation and recompute round-trip errors. Table~\ref{table:surrogate_robustness} reports the endpoints of this sweep. The ranking Diag--CFM $<$ INN $<$ CFM is unchanged for both datasets, even at 20\% output noise.

        Second, for Unifoil we train nine independent surrogates: three architectures with different depths, widths, and activations, each with three random seeds. Their validation MSEs range from $6.9 \times 10^{-5}$ to $8.6 \times 10^{-5}$. Re-evaluating the same generated designs under all nine surrogates gives the same ranking in every case (Table~\ref{table:surrogate_robustness}), with inter-method gaps exceeding the variation induced by the choice of surrogate.

        \begin{table}[h]
    \caption{Surrogate robustness checks for datasets whose round-trip metric uses a learned forward surrogate. For noise injection, Gaussian noise is added to surrogate outputs as a fraction of the per-label training-set standard deviation; entries are ensemble mean $\pm$ ensemble standard deviation after averaging over five noise seeds for nonzero noise. For cross-surrogate validation, entries are mean and range over nine independently trained Unifoil surrogates. Lower is better.}
    \label{table:surrogate_robustness}
    \centering
    \setlength{\tabcolsep}{4pt}
    \resizebox{0.95\linewidth}{!}{%
    \begin{sc}
        \begin{tabular}{llccc}
        \toprule
        Check & Setting & Diag-CFM & INN & CFM \\
        \midrule
        Gas turbine noise & 0\% & $\mathbf{1.35 \times 10^{-3}} \pm 1.11 \times 10^{-4}$ & $1.63 \times 10^{-2} \pm 1.50 \times 10^{-3}$ & $1.86 \times 10^{-2} \pm 2.36 \times 10^{-3}$ \\
        Gas turbine noise & 20\% & $\mathbf{2.75 \times 10^{-3}} \pm 1.10 \times 10^{-4}$ & $1.77 \times 10^{-2} \pm 1.50 \times 10^{-3}$ & $2.00 \times 10^{-2} \pm 2.37 \times 10^{-3}$ \\
        \addlinespace
        Unifoil noise & 0\% & $\mathbf{1.68 \times 10^{-3}} \pm 9.37 \times 10^{-4}$ & $3.79 \times 10^{-3} \pm 1.84 \times 10^{-3}$ & $1.10 \times 10^{-2} \pm 2.51 \times 10^{-3}$ \\
        Unifoil noise & 20\% & $\mathbf{3.19 \times 10^{-3}} \pm 9.37 \times 10^{-4}$ & $5.30 \times 10^{-3} \pm 1.84 \times 10^{-3}$ & $1.25 \times 10^{-2} \pm 2.51 \times 10^{-3}$ \\
        \addlinespace
        Unifoil 9-surrogate CV & mean [range] & $\mathbf{1.75 \times 10^{-3}}$ [$1.67,1.84$]$\times10^{-3}$ & $4.05 \times 10^{-3}$ [$3.11,6.35$]$\times10^{-3}$ & $1.10 \times 10^{-2}$ [$1.09,1.10$]$\times10^{-2}$ \\
        \bottomrule
        \end{tabular}
    \end{sc}
    }
\end{table}

    \section{ODE Solver Sensitivity}
        \label{appendix:ode_solver_sensitivity}

        Our default CFM and Diag--CFM evaluations use an explicit Euler solver with 30 uniform steps. To assess the sensitivity of the conclusions to this choice, we re-evaluate the same gas turbine checkpoints with Euler steps $\{20,30,50,100\}$, RK4 steps $\{20,30,50,100\}$, and adaptive Dopri5 with $\texttt{atol}{=}\texttt{rtol}{=}10^{-5}$. The INN baseline has no numerical-integration step, so the table below focuses on the ODE-based methods and UQ scores.

        \begin{table}[h]
    \caption{Gas turbine ODE solver sensitivity across Euler steps $\{20,30,50,100\}$, RK4 steps $\{20,30,50,100\}$, and adaptive Dopri5. Entries report the range across the nine solver configurations and the relative spread $(\max-\min)/\min$. Only ODE-based models are listed for accuracy metrics; INN has no numerical-integration step.}
    \label{table:ode_solver_sensitivity}
    \centering
    \setlength{\tabcolsep}{4pt}
    \begin{sc}
        \begin{tabular}{llcc}
        \toprule
        Group & Quantity & Range & Spread \\
        \midrule
        Accuracy & Diag--CFM forward MSE & $1.55{\times}10^{-3}$--$1.68{\times}10^{-3}$ & $8.1\%$ \\
        Accuracy & CFM forward MSE & $1.54{\times}10^{-2}$--$2.05{\times}10^{-2}$ & $32.7\%$ \\
        Accuracy & Diag--CFM round-trip & $1.31{\times}10^{-3}$--$1.42{\times}10^{-3}$ & $8.3\%$ \\
        Accuracy & CFM round-trip & $1.68{\times}10^{-2}$--$2.49{\times}10^{-2}$ & $48.4\%$ \\
        \addlinespace
        UQ & Zero-Deviation AUC & $0.772$--$0.774$ & $0.3\%$ \\
        UQ & Self-Consistency AUC & $0.690$--$0.701$ & $1.6\%$ \\
        UQ & Ensemble Variance AUC & $0.609$--$0.617$ & $1.3\%$ \\
        UQ & FM Loss AUC & $0.671$--$0.674$ & $0.5\%$ \\
        \bottomrule
        \end{tabular}
    \end{sc}
\end{table}

        Across all nine solver configurations, Diag--CFM remains lower-error than CFM. The full sweep also shows lower solver sensitivity for Diag--CFM than for CFM: Diag--CFM varies by 8.1\% in forward MSE and 8.3\% in round-trip error, whereas CFM varies by 32.7\% and 48.4\%, respectively. RK4 already stabilizes at 20 steps and agrees closely with adaptive Dopri5.

        The uncertainty results are similarly stable. On the hard gas-turbine OOD split, Zero-Deviation AUC ranges from 0.7716 to 0.7742 across the nine solver configurations, while Self-Consistency ranges from 0.6899 to 0.7008, Ensemble Variance from 0.6091 to 0.6171, and FM Loss from 0.6706 to 0.6739. The ranking Zero-Deviation $>$ Self-Consistency $>$ FM Loss $>$ Ensemble Variance is unchanged throughout the sweep. Thus, the discriminative signal in Zero-Deviation is preserved under solver refinement, supporting its interpretation as an endpoint-geometry signal induced by zero-anchoring.

    \section{Implementation Details}
        \label{appendix:implementation}

        \subsection{Training Algorithm}

        Each iteration samples a minibatch $\{(x_i,y_i)\}_{i=1}^B$ from the dataset, draws $z_i$ from a uniform $[0,1]$ base distribution. We build the Diag--CFM states
        \[
            s_{0,i}^{\mathrm{diag}}=\big[z_i;\; y_i\big], \qquad s_{1,i}^{\mathrm{diag}}=\big[x_i;\; 0_L\big],
        \]
        sample $t_i \sim \mathcal{U}[0,1]$, form the interpolation $s_{t,i}^{\mathrm{diag}}=(1-t_i)\,s_{0,i}^{\mathrm{diag}} + t_i\, s_{1,i}^{\mathrm{diag}}$ with target velocity $u_i^{\mathrm{diag}} = s_{1,i}^{\mathrm{diag}} - s_{0,i}^{\mathrm{diag}} = \big[x_i - z_i;\, -y_i\big]$, and minimize the Diag--CFM loss with respect to $\theta$.

        For synthesis and analysis, we solve the induced ODE using an explicit Euler integrator with $30$ uniform steps on $t\in[0,1]$, integrating forward for synthesis ($0\!\to\!1$) and reversing the sign for analysis ($1\!\to\!0$). The choice of an explicit Euler integrator with 30 uniform steps for ODE integration is standard in the flow matching literature, where the linear interpolation paths yield approximately straight trajectories that require substantially fewer integration steps than diffusion models \cite{lipmanflow}. Appendix~\ref{appendix:ode_solver_sensitivity} reports a gas turbine solver-sensitivity sweep showing stable method rankings and stable Zero-Deviation OOD AUC under Euler, RK4, and adaptive Dopri5.

        \subsection{Invertible Neural Network Baseline}

        As an additional baseline, we train coupling-based Invertible Neural Networks (INNs)~\cite{dinh2017density,ardizzoneanalyzing}. Each INN consists of a sequence of affine coupling layers interleaved with fixed random permutations. In each coupling layer, the input is split into two halves $(u_1, u_2)$, and the transformation is
        \begin{align*}
            v_1 &= u_1 \odot \exp\!\big(s_2(u_2)\big) + t_2(u_2), \\
            v_2 &= u_2 \odot \exp\!\big(s_1(v_1)\big) + t_1(v_1),
        \end{align*}
        where $s_k, t_k$ are MLP subnets. Scale factors are soft-clamped via $s \mapsto c \cdot \tanh(s/c)$ with $c{=}2.0$ for numerical stability. The inverse is computed analytically. For the Unifoil dataset, which requires conditioning on physical parameters $c$ (angle of attack and Mach number), we use a conditional variant where each subnet receives $c$ as additional input, i.e., $s_2(u_2, c)$, $t_2(u_2, c)$, etc.

        Following \cite{ardizzoneanalyzing} the INN is trained with a bidirectional loss
        \[
            \mathcal{L}_{\text{INN}} = \lambda_y \mathcal{L}_y + \lambda_z \mathcal{L}_z + \lambda_x \mathcal{L}_x,
        \]
        where $\mathcal{L}_y = \|y_{\text{pred}} - y\|^2$ is the forward MSE, $\mathcal{L}_z = \text{MMD}^2(z_{\text{pred}},\, \mathcal{N}(0,I))$ encourages the latent distribution to match a standard Gaussian prior, and $\mathcal{L}_x = \text{MMD}^2(x_{\text{rec}},\, x)$ penalizes backward reconstruction error (with $x_{\text{rec}} = T^{-1}(y, z')$ for $z' \sim \mathcal{N}(0,I)$). MMD is computed with an inverse multiquadratic kernel. We use $\lambda_y{=}1$, $\lambda_z{=}1$, $\lambda_x{=}10$ across all datasets.

        In all experiments, INN hyperparameters (number of coupling blocks, subnet hidden dimension, subnet depth) are tuned so that the total parameter count approximately matches that of the corresponding Diag--CFM/CFM models, ensuring fair comparison. Dataset-specific INN configurations are detailed in the following subsections.

        \subsection{Gas Turbine Combustor}

        For the combustor dataset ($P{=}6$ design parameters and $L{=}3$ performance labels), we parameterize $v_\theta$ with an MLP consisting of $4$ hidden layers of width $1024$ (hidden dimension $512\cdot 2$), LeakyReLU activations, and no dropout. Time is injected by concatenating the scalar $t$ to the state, so the network input is $[s;t]$. Accordingly, for Diag--CFM the MLP uses input/output dimensions $(P{+}L{+}1)\rightarrow(P{+}L)$, while for the CFM baseline it uses $(P{+}1)\rightarrow P$. We train for $20$ epochs with batch size $5000$ and learning rate $10^{-3}$.

        The INN baseline uses $4$ coupling blocks with subnet hidden dimension $256$, subnet depth $3$, and ReLU activations (${\sim}2.1$M parameters). Training uses batch size $1000$, learning rate $10^{-3}$, and $50$ epochs.

        \subsection{Unifoil}

        For the Unifoil dataset ($P{=}14$ design parameters and $L{=}3$ performance labels), we parameterize $v_\theta$ with an MLP of $3$ hidden layers of width $1024$ (hidden dimension $512\cdot 2$), ReLU activations, and no dropout. Unlike the gas turbine model, Unifoil requires conditioning on physical parameters (angle of attack and Mach number, $C{=}2$ dimensions), which are concatenated to the network input. Thus, for Diag--CFM the network maps $(P{+}L{+}C{+}1)\rightarrow(P{+}L)$, while for the CFM baseline it maps $(P{+}C{+}1)\rightarrow P$. We train for $100$ epochs with batch size $100$, learning rate $10^{-3}$, and a StepLR schedule (step size $20$, decay factor $0.8$).

        The INN baseline uses the conditional variant with $4$ coupling blocks, subnet hidden dimension $256$, subnet depth $3$, and ReLU activations (${\sim}2.2$M parameters). Training uses batch size $100$, learning rate $10^{-3}$, and $100$ epochs.

        \subsection{DTLZ Benchmark}

        For each dimension $P \in \{12, 24, 50, 100\}$, we generate 100,000 training samples. To evaluate scalability accurately, we mitigate the ``curse of dimensionality'' (where standard uniform sampling causes points to concentrate at boundaries for large $P$) using a stratified sampling strategy: the position parameters $x_1, \ldots, x_{L-1}$ are sampled uniformly from $[0,1]$ to ensure angular coverage, while the distance parameters $x_L, \ldots, x_P$ are sampled radially around the center $0.5$. Specifically, we sample a target $g$ value uniformly from $[0, g_{\max}]$ with $g_{\max}=2$ (yielding objective values up to $\sim$3, since $f = (1+g) \cdot \text{angular terms}$) and use a Dirichlet distribution to allocate $(x_i - 0.5)^2$ contributions across the $P{-}L{+}1$ distance coordinates. This ensures the distribution of the distance function $g(x)$ spans from optimal to sub-optimal uniformly, independent of $P$.

        The MLP architecture adapts to dimension: hidden width 512 and depth 3 for $P \leq 20$; width 1024 and depth 4 for $20 < P \leq 50$; width 1024 and depth 5 for $P > 50$. Training uses batch size 1000, learning rate $10^{-3}$, and 50 epochs. We train ensembles of 5 models for both Diag--CFM and standard CFM. Evaluation uses a held-out test set of 5,000 samples generated with a different random seed.

        The INN baseline architecture also scales with dimension: $2$ blocks, hidden $128$, subnet depth $5$ for $P{\leq}20$; $3$ blocks, hidden $256$, depth $5$ for $20{<}P{\leq}35$; $5$ blocks, hidden $224$, depth $4$ for $35{<}P{\leq}70$; $5$ blocks, hidden $256$, depth $4$ for $P{>}70$. All use LeakyReLU activations. Training uses batch size $1000$, learning rate $10^{-3}$, and $50$ epochs. These configurations are tuned to approximately match the Diag--CFM/CFM parameter count at each dimension.

        \subsection{Thin Film Optical Coating}

        For the thin film benchmark, Diag--CFM and CFM use MLP velocity networks with hidden width $1024$, depth $4$, LeakyReLU activations, and no dropout. Diag--CFM maps $(P{+}L{+}1)\rightarrow(P{+}L)$, while CFM maps $(P{+}1)\rightarrow P$. We train for $50$ epochs with batch size $1000$, learning rate $10^{-3}$, and a cosine annealing learning-rate schedule with minimum learning rate $10^{-5}$.

        The INN baseline uses $4$ coupling blocks with subnet hidden dimension $256$, subnet depth $4$, and soft-clamping value $2.0$, giving approximately the same parameter count as the CFM models. It is trained for $50$ epochs with batch size $1000$, learning rate $10^{-3}$, and the same bidirectional INN loss as the other datasets.

        \subsection{FashionMNIST Image Statistics}

        For the FashionMNIST benchmark, Diag--CFM and CFM use the same CNN velocity-network family. The network is a U-Net-style encoder-decoder for $28{\times}28$ images with channel widths $64$, $128$, and $256$, skip connections, FiLM conditioning from the time embedding and label state, and self-attention at the $7{\times}7$ bottleneck. Diag--CFM receives the concatenated image state, label state, and time, and predicts both image and label velocities. Standard CFM receives only image state and time, and predicts image velocity. Both models are trained for $50$ epochs with batch size $256$, learning rate $10^{-3}$, and a cosine annealing learning-rate schedule with minimum learning rate $10^{-5}$.

        The INN baseline uses $4$ affine coupling blocks with subnet hidden dimension $256$, subnet depth $3$, and soft-clamping value $2.0$. It is trained for $50$ epochs with batch size $256$, learning rate $10^{-3}$, and the same bidirectional INN loss as the other datasets.

    \section{Accuracy-Conditioned Diversity}
        \label{appendix:accuracy_diversity}

        Raw diversity metrics can be misleading: a model generating inaccurate designs may exhibit high diversity simply because generated samples do not correspond to the target performance. To address this, we analyze how design diversity changes when filtering by round-trip error accuracy. The figures below show diversity as a function of the error threshold $\varepsilon$---only designs with round-trip error below $\varepsilon$ contribute to the diversity computation.

        \subsection{Unifoil}

        Figure~\ref{fig:diversity_vs_epsilon_unifoil} analyzes design diversity as a function of round-trip error threshold for the Unifoil dataset. Similar to the gas turbine results (Figure~\ref{fig:diversity_vs_epsilon_gas_turbine} in the main text), Diag--CFM's valid sample count reaches maximum at lower $\varepsilon$ values than CFM, reflecting its superior inverse mapping accuracy. At strict accuracy thresholds, Diag--CFM maintains competitive diversity while CFM's apparent diversity advantage diminishes when restricted to accurate designs. INN shows intermediate behavior with moderate accuracy but lower overall diversity. This analysis demonstrates that Diag--CFM provides the best trade-off between accuracy and diversity for airfoil design.

        \begin{figure}[h]
            \centering
            \includegraphics[width=0.7\linewidth]{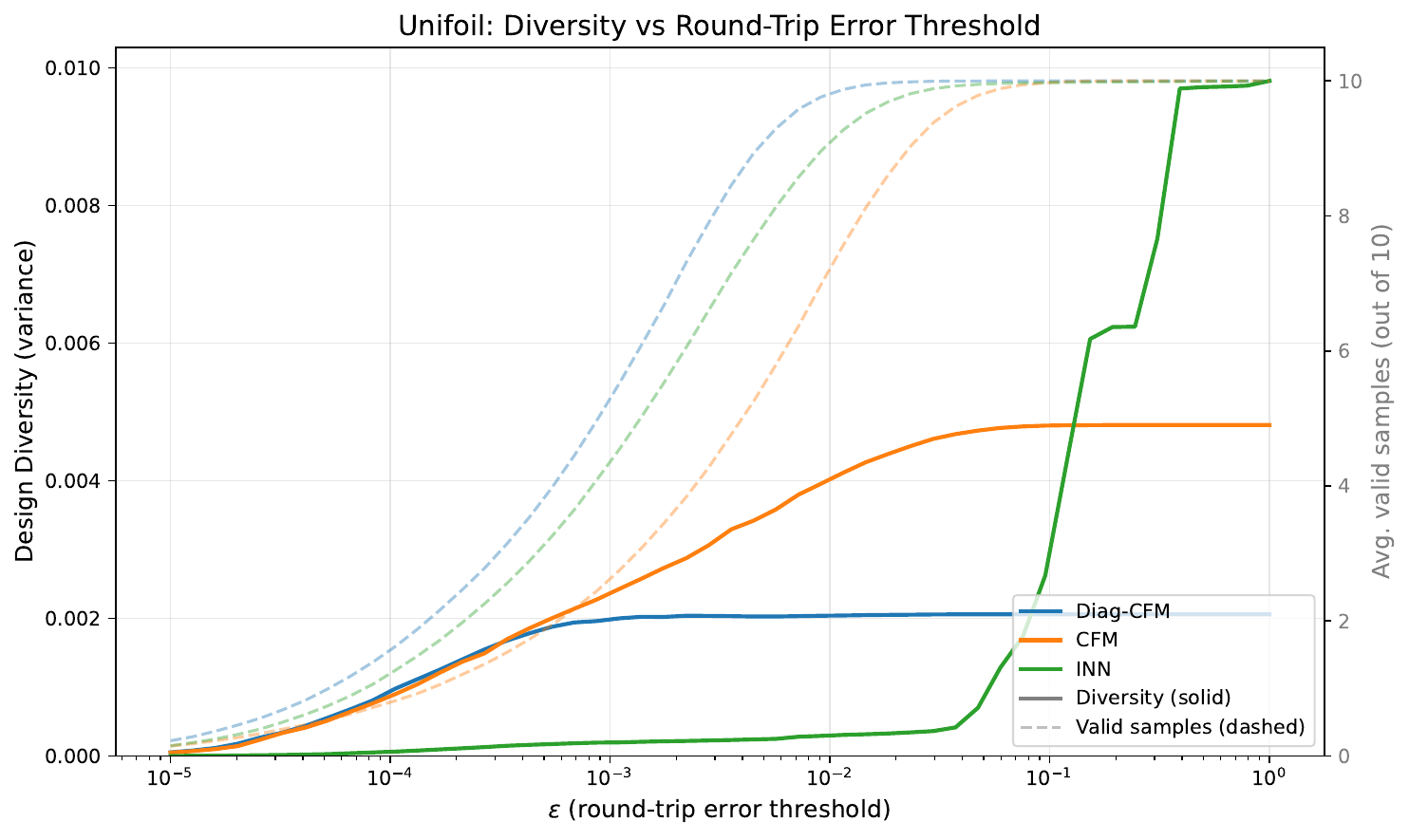}
            \caption{Design diversity as a function of round-trip error threshold $\varepsilon$ for the Unifoil dataset. Solid lines show mean design diversity computed only over samples with error $< \varepsilon$; dashed lines show valid sample counts. Diag--CFM achieves high diversity at strict accuracy thresholds where it has more valid samples than competing methods.}
            \label{fig:diversity_vs_epsilon_unifoil}
        \end{figure}

        \subsection{DTLZ Benchmark}

        Table~\ref{table:dtlz_diversity} reports raw design diversity across all dimensions. INN achieves the highest raw diversity at every $P$, followed closely by Diag--CFM, with CFM consistently lowest.

        \begin{table}[t]
    \caption{DTLZ2 design diversity across design dimensions ($L{=}3$ objectives fixed). Design diversity measures mean variance of design parameters across generated samples for the same target. All values over 5 runs. Best values per dimension are in bold.}
    \label{table:dtlz_diversity}
    \centering
    \setlength{\tabcolsep}{4pt}
    \resizebox{0.5\columnwidth}{!}{%
    \begin{sc}
        \begin{tabular}{lccc}
        \toprule
        \multicolumn{4}{c}{Design Diversity} \\
        \midrule
        P & INN & CFM & Diag-CFM \\
        \midrule
        12 & $\mathbf{7.52 \times 10^{-2}}$ & $6.25 \times 10^{-2}$ & $7.30 \times 10^{-2}$ \\
        & \scriptsize $\pm 0.17 \times 10^{-2}$ & \scriptsize $\pm 0.22 \times 10^{-2}$ & \scriptsize $\pm 0.10 \times 10^{-2}$ \\
        \addlinespace
        24 & $\mathbf{4.73 \times 10^{-2}}$ & $3.68 \times 10^{-2}$ & $4.03 \times 10^{-2}$ \\
        & \scriptsize $\pm 0.23 \times 10^{-2}$ & \scriptsize $\pm 0.09 \times 10^{-2}$ & \scriptsize $\pm 0.05 \times 10^{-2}$ \\
        \addlinespace
        50 & $\mathbf{2.47 \times 10^{-2}}$ & $1.98 \times 10^{-2}$ & $1.91 \times 10^{-2}$ \\
        & \scriptsize $\pm 0.08 \times 10^{-2}$ & \scriptsize $\pm 0.16 \times 10^{-2}$ & \scriptsize $\pm 0.06 \times 10^{-2}$ \\
        \addlinespace
        100 & $\mathbf{1.21 \times 10^{-2}}$ & $1.12 \times 10^{-2}$ & $1.05 \times 10^{-2}$ \\
        & \scriptsize $\pm 0.04 \times 10^{-2}$ & \scriptsize $\pm 0.05 \times 10^{-2}$ & \scriptsize $\pm 0.03 \times 10^{-2}$ \\
        \bottomrule
        \end{tabular}
    \end{sc}
    } 
    \vskip -0.2in
\end{table}

        However, raw diversity can be misleading when models differ in accuracy. Figure~\ref{fig:diversity_vs_epsilon_dtlz} shows design diversity as a function of round-trip error threshold for the DTLZ2 benchmark at $P{=}100$, the highest-dimensional DTLZ setting we evaluate. The pattern is consistent across all tested dimensions: Diag--CFM's valid sample count (dashed blue line) reaches maximum at substantially lower $\varepsilon$ values than CFM or INN, reflecting its superior round-trip accuracy. At strict thresholds where accuracy matters, Diag--CFM maintains meaningful diversity while CFM and INN have few qualifying samples. This confirms that Diag--CFM generates designs that are both accurate and diverse, whereas the higher raw diversity of competing methods largely stems from inaccurate samples that do not satisfy the target specifications.

        \begin{figure}[h]
            \centering
            \includegraphics[width=0.7\linewidth]{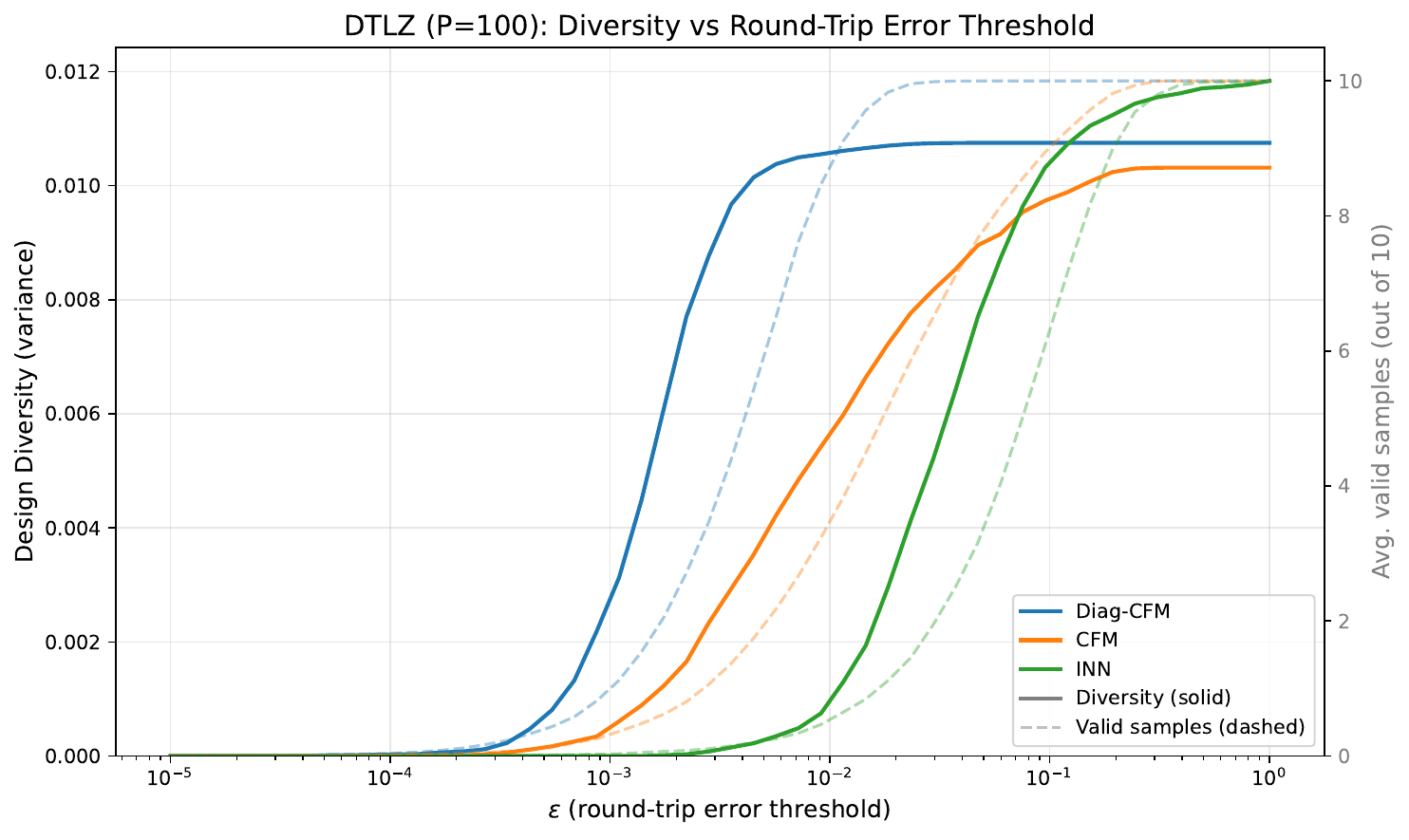}
            \caption{Design diversity as a function of round-trip error threshold $\varepsilon$ for the DTLZ2 benchmark ($P{=}100$, $L{=}3$). Solid lines show mean design diversity computed only over samples with error $< \varepsilon$; dashed lines show valid sample counts. Diag--CFM reaches full valid samples at lower $\varepsilon$, demonstrating superior accuracy while maintaining competitive diversity.}
            \label{fig:diversity_vs_epsilon_dtlz}
        \end{figure}

    \section{Uncertainty Quantification Results}
        \label{appendix:uq_results}

        This appendix provides detailed results for all three uncertainty quantification tasks: select-best, error-rejection, and out-of-distribution detection.

        \subsection{Select-Best}

        We evaluate uncertainty metrics on selecting the best candidate among $K{=}10$ generations for each target. For each target performance specification $y^*$, we generate candidates using different random noise seeds and select the one with lowest uncertainty score. We compare against random selection and oracle selection (choosing the candidate with lowest ground-truth error).

        We evaluate four metrics: Zero-Deviation and Self-Consistency (Diag--CFM specific), and Ensemble Variance and FM Loss (general-purpose baselines). For each, we report Pearson correlation $\rho$ between uncertainty and ground-truth error, and percentage improvement over random selection.

        \begin{table*}[t]
    \caption{Select-best performance across all datasets. For each target, $K{=}10$ candidates are generated and we select the one with lowest uncertainty. We report Pearson correlation between uncertainty and ground-truth error, and \% improvement over random selection. Positive improvement means lower error than random. Zero-Deviation and Self-Consistency (Diag--CFM specific) consistently outperform general-purpose metrics across all datasets.}
    \label{table:select_best}
    \begin{center}
        \begin{small}
            \begin{sc}
                \begin{tabular}{l cc cc cc cc c}
                \toprule
                & \multicolumn{2}{c}{Zero-Dev.} & \multicolumn{2}{c}{Self-Cons.} & \multicolumn{2}{c}{Ens. Var.} & \multicolumn{2}{c}{FM Loss} & Oracle \\
                \cmidrule(lr){2-3} \cmidrule(lr){4-5} \cmidrule(lr){6-7} \cmidrule(lr){8-9} \cmidrule(lr){10-10}
                Dataset & $\rho$ & Impr. & $\rho$ & Impr. & $\rho$ & Impr. & $\rho$ & Impr. & Impr. \\
                \midrule
                Gas Turbine & \textbf{0.60} & \textbf{+23\%} & 0.58 & +16\% & 0.07 & $-$17\% & 0.08 & $-$5\% & +86\% \\
                Unifoil & \textbf{0.21} & \textbf{+31\%} & 0.13 & +28\% & 0.14 & +16\% & $-$0.04 & $-$3\% & +94\% \\
                DTLZ $P{=}12$ & \textbf{0.45} & +13\% & 0.37 & \textbf{+15\%} & 0.17 & $-$4\% & 0.15 & $-$2\% & +84\% \\
                DTLZ $P{=}24$ & 0.20 & \textbf{+7\%} & 0.25 & +5\% & \textbf{0.30} & +3\% & 0.07 & $-$2\% & +82\% \\
                DTLZ $P{=}50$ & 0.38 & \textbf{+16\%} & \textbf{0.42} & +16\% & 0.10 & $-$3\% & 0.06 & $-$3\% & +83\% \\
                DTLZ $P{=}100$ & 0.20 & +28\% & \textbf{0.32} & \textbf{+31\%} & 0.02 & $-$5\% & 0.01 & +2\% & +81\% \\
                \bottomrule
                \end{tabular}
            \end{sc}
        \end{small}
    \end{center}
    \vskip -0.1in
\end{table*}

        \textbf{Results.} The Diag--CFM specific metrics consistently outperform general-purpose methods:
        \begin{itemize}
            \item \textbf{Zero-Deviation} achieves the highest correlation on Gas Turbine ($\rho{=}0.60$), Unifoil ($\rho{=}0.21$), and DTLZ $P{=}12$ ($\rho{=}0.45$), translating to substantial improvements over random selection (+23\%, +31\%, and +13\% respectively).
            \item \textbf{Self-Consistency} performs comparably, achieving the best improvement at DTLZ $P{=}100$ (+31\%) and strong performance across all datasets.
            \item \textbf{General-purpose metrics} (Ensemble Variance, FM Loss) show weak or negative correlations, often performing \emph{worse} than random selection.
        \end{itemize}
        The oracle improvement (81--94\%) indicates significant room for perfect selection, yet the Diag--CFM specific metrics capture a meaningful fraction of this potential.

        \subsection{Error-Rejection}

        We evaluate abstention: rejecting the most uncertain samples and measuring mean error on retained samples. For each of 1,000 target labels, we generate a single design and compute uncertainty using all four metrics. We compute error-rejection curves for rejection rates from 0\% to 50\%, comparing against random and oracle rejection.

        \begin{table*}[t]
    \caption{Error-rejection performance: \% reduction in mean error when rejecting the most uncertain 20\% of samples. Positive values indicate improvement over no rejection. Zero-Deviation and Self-Consistency consistently enable effective abstention, while Ensemble Variance fails at high dimensions (negative on DTLZ $P{=}50$).}
    \label{table:error_rejection}
    \begin{center}
        \begin{small}
            \begin{sc}
                \begin{tabular}{l cccc c}
                \toprule
                Dataset & Zero-Dev. & Self-Cons. & Ens. Var. & FM Loss & Oracle \\
                \midrule
                Gas Turbine & \textbf{+27\%} & +24\% & +8\% & +5\% & +49\% \\
                Unifoil & +14\% & +15\% & \textbf{+18\%} & +5\% & +57\% \\
                DTLZ $P{=}12$ & \textbf{+20\%} & +16\% & +3\% & +6\% & +41\% \\
                DTLZ $P{=}24$ & +8\% & +9\% & \textbf{+13\%} & +7\% & +40\% \\
                DTLZ $P{=}50$ & \textbf{+17\%} & +17\% & $-$1\% & +4\% & +37\% \\
                DTLZ $P{=}100$ & +10\% & \textbf{+12\%} & +1\% & +3\% & +34\% \\
                \bottomrule
                \end{tabular}
            \end{sc}
        \end{small}
    \end{center}
    \vskip -0.1in
\end{table*}

        \textbf{Results.} Table \ref{table:error_rejection} shows error reduction at 20\% rejection:
        \begin{itemize}
            \item \textbf{Zero-Deviation} achieves 8--27\% error reduction, with the strongest performance on Gas Turbine (+27\%) and DTLZ $P{=}12$ (+20\%).
            \item \textbf{Self-Consistency} performs comparably, achieving the best results at DTLZ $P{=}100$ (+12\%) where other metrics struggle.
            \item \textbf{Ensemble Variance} shows inconsistent behavior: reasonable on Unifoil (+18\%) and DTLZ $P{=}24$ (+13\%), but \emph{fails} at high dimensions with negative error reduction on DTLZ $P{=}50$ ($-$1\%).
            \item \textbf{FM Loss} provides minimal improvement (3--7\%), often barely exceeding random rejection.
        \end{itemize}
        The oracle upper bound (34--57\%) shows that perfect uncertainty estimation could substantially improve generation quality through selective abstention.

        \subsection{Out-of-Distribution Detection}
        \label{appendix:ood}

        To evaluate whether our uncertainty metrics can identify when the model is asked to generate designs for physically unrealizable performance targets, we conduct out-of-distribution (OOD) detection experiments. For each dataset, we generate unfeasible performance labels by sampling from the complement of the training distribution, generate designs from these labels using Diag--CFM, and compute uncertainty measures for each generated design.

        \textbf{OOD Point Generation.} We use a grid-based sampling strategy in the label space. First, each label dimension is normalized to $[0,1]$ using the training set min/max. We then create a dense grid (30--40 points per dimension) spanning $[-0.1, 1.1]$ in normalized coordinates, allowing points slightly outside the observed range. For each grid point, we compute the Euclidean distance to the nearest training sample in this normalized space. OOD points are selected based on this \emph{normalized distance}---a distance of 0.05 means the nearest training point is 5\% of that dimension's range away. For the main OOD experiments we select points with normalized distances in $[0.02, 0.08]$. This creates a challenging detection task: OOD samples are geometrically close to the training boundary (within 2--8\% of each dimension's range from the nearest training point) yet clearly outside the training support.

        \textbf{Distance Sweep.} To calibrate this difficulty level, we also evaluate medium-distance OOD targets with nearest-neighbor distances in $[0.05,0.15]$ and far-OOD targets in $[0.15,1.0]$. Table~\ref{table:ood_distance_sweep} reports Zero-Deviation AUC for all three regimes. Across every dataset, AUC increases monotonically as targets move farther from the training support, confirming that the main hard setting is the most challenging distance-based OOD task.

        \begin{table}[h]
    \caption{OOD distance sweep for Zero-Deviation. Entries report OOD detection AUC under the same Euler-30 setting used in the main OOD experiments. Easy, medium, and hard correspond to normalized nearest-neighbor distance ranges of 15--100\%, 5--15\%, and 2--8\% of the label range, respectively. The main paper reports the hard near-boundary setting.}
    \label{table:ood_distance_sweep}
    \centering
    \setlength{\tabcolsep}{6pt}
    \begin{sc}
        \begin{tabular}{lccc}
        \toprule
        Dataset & Easy & Medium & Hard \\
        \midrule
        Gas turbine & 0.993 & 0.903 & 0.773 \\
        Unifoil & 0.994 & 0.976 & 0.957 \\
        DTLZ $P{=}12$ & 1.000 & 0.890 & 0.823 \\
        DTLZ $P{=}24$ & 0.969 & 0.838 & 0.739 \\
        DTLZ $P{=}50$ & 0.978 & 0.851 & 0.790 \\
        DTLZ $P{=}100$ & 0.980 & 0.861 & 0.731 \\
        \bottomrule
        \end{tabular}
    \end{sc}
\end{table}

        \textbf{Uncertainty Metrics.} We compare four uncertainty measures: (1) \textbf{Zero-Deviation}, the squared norm of the label portion of the synthesis output at $t{=}1$, which should be near zero for in-distribution samples; (2) \textbf{Self-Consistency}, the reconstruction error when passing generated designs through an analysis pass; (3) \textbf{Ensemble Variance} across 5 independently trained Diag--CFM models; and (4) \textbf{FM Loss}, the flow matching loss at $t{=}0.5$.

        \subsubsection{Gas Turbine Combustor}
        
        For the gas turbine dataset, we evaluate 2,500 in-distribution samples from the validation set and 2,500 OOD samples selected using the calibrated distance strategy described above.
        
        Figure~\ref{fig:uq_ood_violin} shows violin plots comparing uncertainty measure distributions for in-distribution versus OOD samples. Figure~\ref{fig:uq_ood_roc} presents ROC curves for OOD detection, where \textbf{Zero-Deviation achieves the best performance (AUC = 0.75)}, followed by Self-Consistency (AUC = 0.69), FM Loss (AUC = 0.68), and Ensemble Variance (AUC = 0.62).
        
        \begin{figure}[h]
            \centering
            \includegraphics[width=0.85\linewidth]{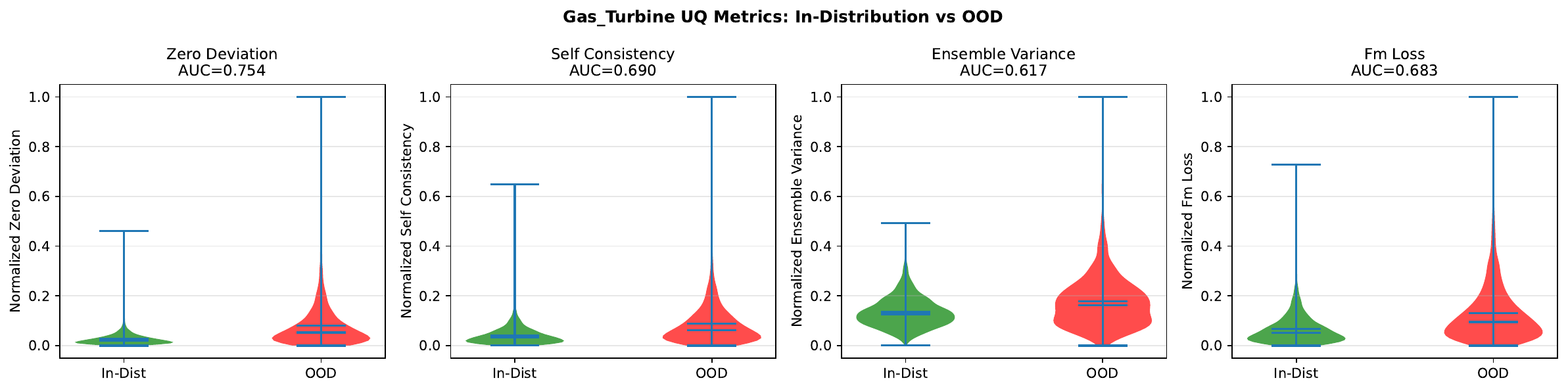}
            \caption{Violin plots comparing uncertainty measure distributions for in-distribution (green) versus out-of-distribution (red) samples on the Gas Turbine dataset. OOD samples are label-space points whose nearest training neighbor is 2--8\% of each dimension's range away.}
            \label{fig:uq_ood_violin}
        \end{figure}
        
        \begin{figure}[h]
            \centering
            \includegraphics[width=0.6\linewidth]{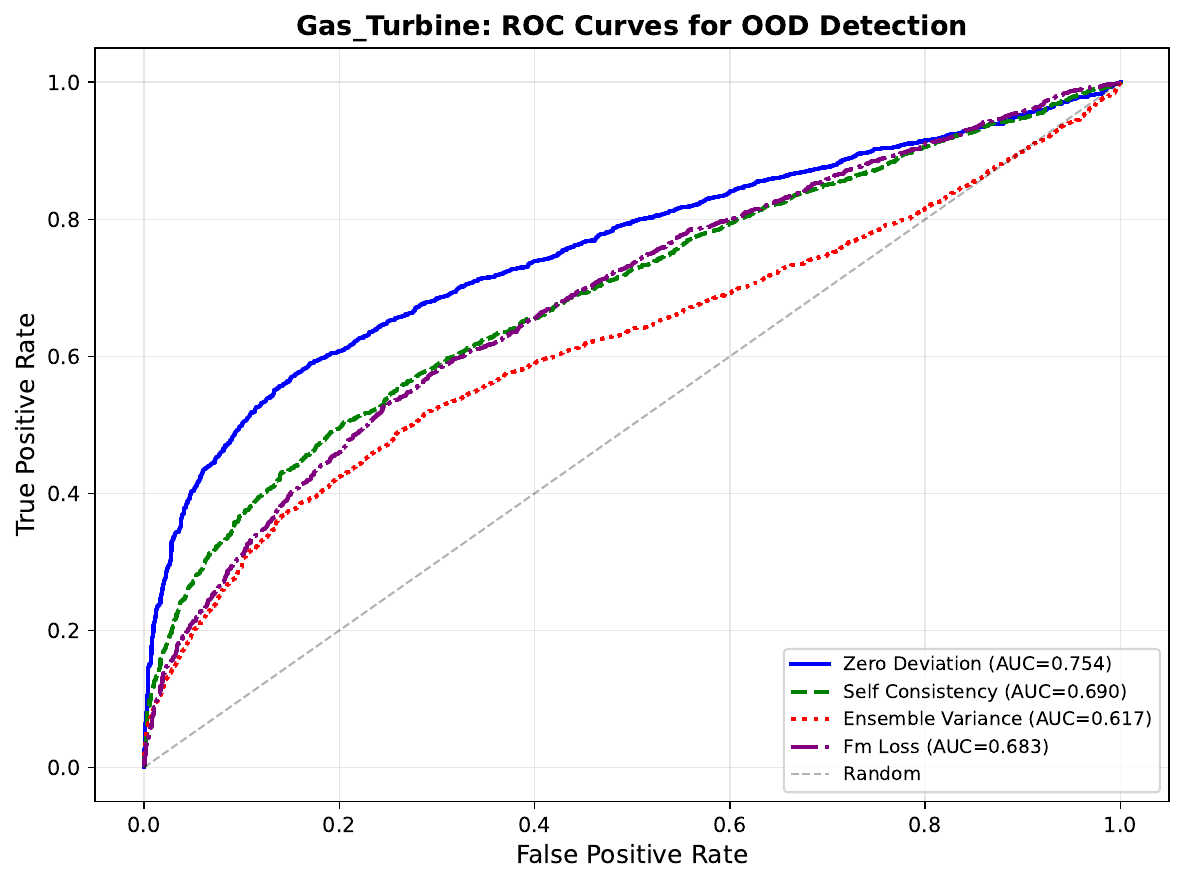}
            \caption{ROC curves for detecting out-of-distribution (unfeasible) performance labels on the Gas Turbine dataset. Zero-Deviation achieves AUC = 0.75, demonstrating the strongest OOD detection capability.}
            \label{fig:uq_ood_roc}
        \end{figure}
        
        \subsubsection{Unifoil}

        For the Unifoil dataset, we evaluate 2,500 in-distribution samples from the validation set and 2,500 OOD samples. For each label, we randomly sample physical conditioning parameters (angle of attack and Mach number) from the validation distribution.
        
        Figure~\ref{fig:uq_ood_violin_unifoil} shows violin plots comparing uncertainty distributions, revealing clear separation between in-distribution and OOD samples. Figure~\ref{fig:uq_ood_roc_unifoil} presents ROC curves showing strong discrimination: \textbf{Zero-Deviation achieves the best performance (AUC = 0.96)}, followed by FM Loss (AUC = 0.89), Self-Consistency (AUC = 0.87), and Ensemble Variance (AUC = 0.85).
        
        \begin{figure}[h]
            \centering
            \includegraphics[width=0.85\linewidth]{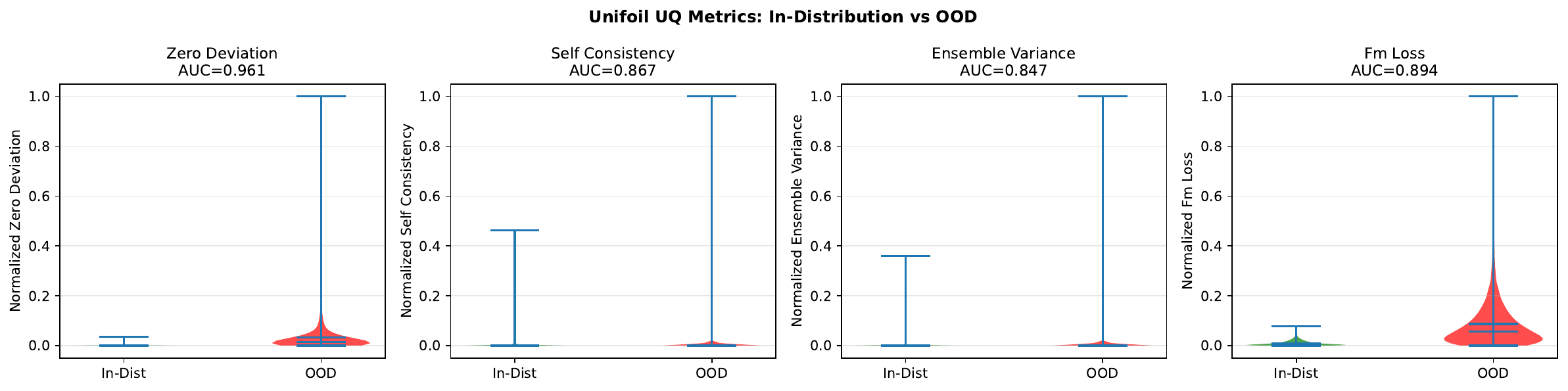}
            \caption{Violin plots comparing uncertainty measure distributions for in-distribution (green) versus out-of-distribution (red) samples on the Unifoil dataset.}
            \label{fig:uq_ood_violin_unifoil}
        \end{figure}
        
        \begin{figure}[h]
            \centering
            \includegraphics[width=0.6\linewidth]{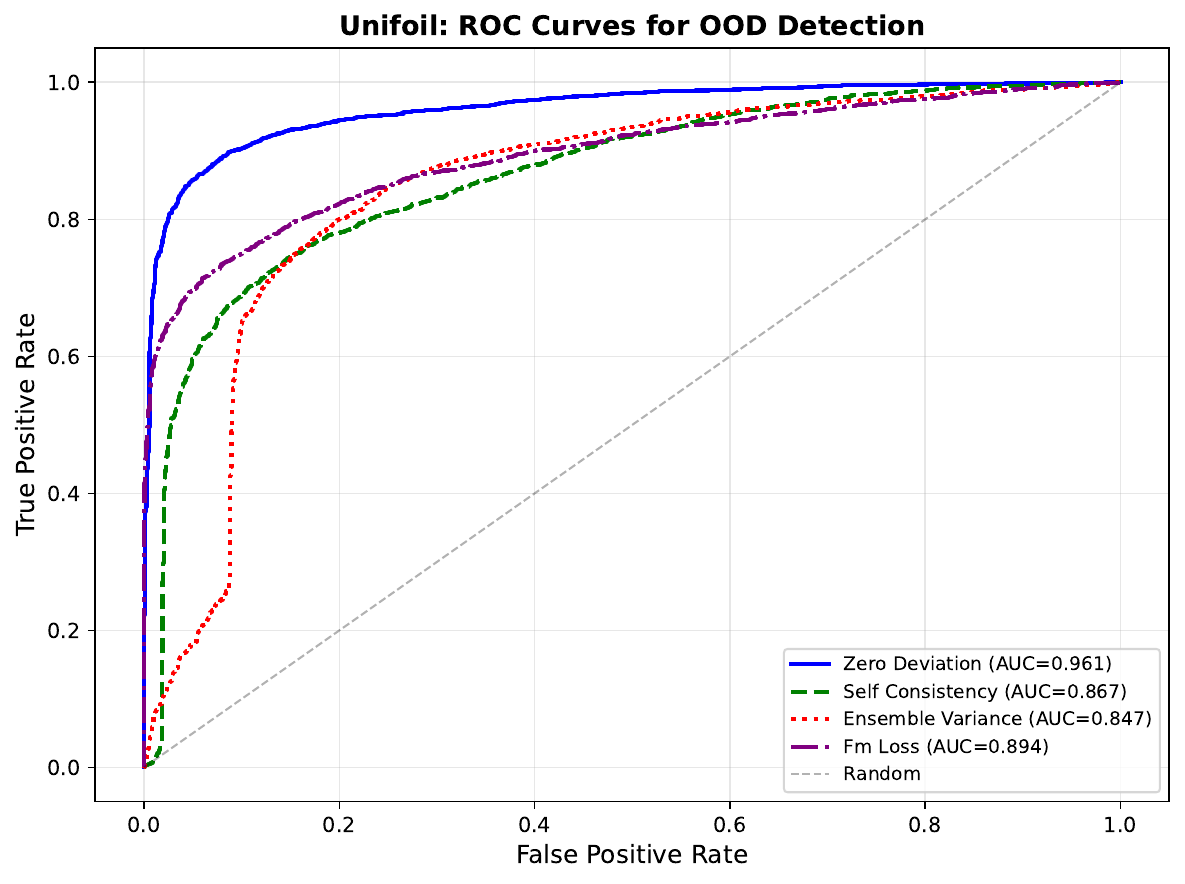}
            \caption{ROC curves for OOD detection on the Unifoil dataset. Zero-Deviation achieves AUC = 0.96, demonstrating excellent OOD detection despite sampling OOD points close to the training distribution boundary.}
            \label{fig:uq_ood_roc_unifoil}
        \end{figure}

        We also evaluate four Unifoil OOD tests beyond nearest-neighbor label-space distance. Three create physically infeasible targets by perturbing real validation labels: negative drag flips $C_D$ below zero, extreme lift-to-drag requests L/D ratios in the range 200--500, and below-drag-polar lowers $C_D$ beneath an empirical drag-polar envelope while keeping each label dimension individually within the training range. The fourth test keeps target labels in-distribution but samples Mach number and angle of attack from bands extending 20\% beyond the training range. Table~\ref{table:unifoil_ood_breadth} shows that Zero-Deviation remains strongly discriminative across all four settings, including the below-drag-polar case where infeasibility lies in the combination of lift and drag rather than a single out-of-range label.

        \begin{table}[h]
    \caption{Unifoil OOD breadth tests. Entries report OOD detection AUC using 1000 in-distribution samples and 1000 OOD samples with Euler-30 evaluation. Higher is better.}
    \label{table:unifoil_ood_breadth}
    \centering
    \setlength{\tabcolsep}{5pt}
    \begin{sc}
        \begin{tabular}{lcccc}
        \toprule
        OOD type & Zero-Dev & Self-Cons & Ens-Var & FM Loss \\
        \midrule
        Negative drag & 0.996 & 0.952 & \textbf{0.999} & 0.992 \\
        Extreme L/D & \textbf{1.000} & 0.987 & 0.995 & \textbf{1.000} \\
        Below drag polar & \textbf{0.988} & 0.897 & 0.826 & 0.763 \\
        Conditioning shift & \textbf{0.981} & 0.859 & 0.888 & 0.929 \\
        \bottomrule
        \end{tabular}
    \end{sc}
\end{table}

        \subsubsection{DTLZ Benchmark}

        Table~\ref{table:dtlz_uq} shows OOD detection AUC scores for Diag--CFM across dimensions $P \in \{12, 24, 50, 100\}$. We evaluate 2,500 in-distribution and 2,500 OOD samples per configuration. \textbf{Zero-Deviation consistently achieves the best or near-best OOD detection} across all dimensions, with AUC scores ranging from 0.73 to 0.82. Zero-Deviation and Self-Consistency maintain strong performance even at $P{=}50$, where Ensemble Variance begins to struggle.
        
        \begin{table}[h]
            \caption{OOD detection AUC scores for DTLZ benchmark across design space dimensions.}
            \label{table:dtlz_uq}
            \centering
            \begin{tabular}{lcccc}
                \toprule
                $P$ & Zero-Dev & Self-Cons & Ens-Var & FM Loss \\
                \midrule
                12 & \textbf{0.82} & 0.77 & 0.78 & 0.69 \\
                24 & \textbf{0.73} & 0.72 & 0.73 & 0.61 \\
                50 & \textbf{0.78} & 0.76 & 0.66 & 0.59 \\
                100 & \textbf{0.73} & 0.51 & 0.59 & 0.63 \\
                \bottomrule
            \end{tabular}
        \end{table}


\end{document}